\useunder{\uline}{\ul}{}
\theoremstyle{plain}
\newcommand{\tabincell}[2]{\begin{tabular}{@{}#1@{}}#2\end{tabular}}
\def\I{{\bf I}}
\def\x{{\bf x}}
\def\y{{\bf y}}
\def\z{{\bf z}}
\def\m{{\bf m}}
\def\u{{\bf u}}
\def\0{{\bf 0}}
\def\1{{\bf 1}}
\def\NM{{\mathcal N}}
\def\LM{{\mathcal L}}
\def\RB{{\mathbb R}}
\def\EB{{\mathbb E}}
\def\ph{\mbox{\boldmath$\phi$\unboldmath}}
\def\tha{\mbox{\boldmath$\theta$\unboldmath}}
\def\argmax{\mathop{\rm argmax}}
\newtheorem{theorem}{Theorem}
\newtheorem{definition}{Definition}
\newtheorem{example}{Example}
\numberwithin{theorem}{section}
\numberwithin{lemma}{section}
\numberwithin{remark}{section}
\numberwithin{cor}{section}
\numberwithin{proposition}{section}
\numberwithin{definition}{section}
\newcommand{\tabref}[1]{Table~\ref{#1}}
\newcommand{\secref}[1]{Sec.~\ref{#1}}
\newcommand{\appref}[1]{Appendix~\ref{#1}}
\newcommand{\figref}[1]{Fig.~\ref{#1}}
\newcommand{\thmref}[1]{Theorem~\ref{#1}}
\newcommand{\eqnref}[1]{Eqn.~\ref{#1}}
\newcommand{\exaref}[1]{Example~\ref{#1}}
\newcommand{\defref}[1]{Definition~\ref{#1}}
\newcommand{\algref}[1]{Algorithm~\ref{#1}}
\renewcommand{\cite}{\citep}
\icmltitlerunning{Self-Interpretable Time Series Prediction with Counterfactual Explanations}
\begin{document}
\twocolumn[
% \icmltitle{Towards Self-Interpretable Time Series Prediction \\ with Counterfactual Explanations}
\icmltitle{Self-Interpretable Time Series Prediction with Counterfactual Explanations}

% It is OKAY to include author information, even for blind
% submissions: the style file will automatically remove it for you
% unless you've provided the [accepted] option to the icml2023
% package.

% List of affiliations: The first argument should be a (short)
% identifier you will use later to specify author affiliations
% Academic affiliations should list Department, University, City, Region, Country
% Industry affiliations should list Company, City, Region, Country

% You can specify symbols, otherwise they are numbered in order.
% Ideally, you should not use this facility. Affiliations will be numbered
% in order of appearance and this is the preferred way.
% \icmlsetsymbol{equal}{*}

\begin{icmlauthorlist}
\icmlauthor{Jingquan Yan}{ru}
\icmlauthor{Hao Wang}{ru}
% \icmlauthor{Firstname3 Lastname3}{comp}
% \icmlauthor{Firstname4 Lastname4}{sch}
% \icmlauthor{Firstname5 Lastname5}{yyy}
% \icmlauthor{Firstname6 Lastname6}{sch,yyy,comp}
% \icmlauthor{Firstname7 Lastname7}{comp}
%\icmlauthor{}{sch}
% \icmlauthor{Firstname8 Lastname8}{sch}
% \icmlauthor{Firstname8 Lastname8}{yyy,comp}
%\icmlauthor{}{sch}
%\icmlauthor{}{sch}
\end{icmlauthorlist}

\icmlaffiliation{ru}{Department of Computer Science, Rutgers University}
% \icmlaffiliation{comp}{Company Name, Location, Country}
% \icmlaffiliation{sch}{School of ZZZ, Institute of WWW, Location, Country}

\icmlcorrespondingauthor{Jingquan Yan}{jy766@rutgers.edu}
% \icmlcorrespondingauthor{Firstname2 Lastname2}{first2.last2@www.uk}

% You may provide any keywords that you
% find helpful for describing your paper; these are used to populate
% the "keywords" metadata in the PDF but will not be shown in the document
\icmlkeywords{Machine Learning, ICML}

\vskip 0.3in
]

% this must go after the closing bracket ] following \twocolumn[ ...

% This command actually creates the footnote in the first column
% listing the affiliations and the copyright notice.
% The command takes one argument, which is text to display at the start of the footnote.
% The \icmlEqualContribution command is standard text for equal contribution.
% Remove it (just {}) if you do not need this facility.

\printAffiliationsAndNotice{}  % leave blank if no need to mention equal contribution
% \printAffiliationsAndNotice{\icmlEqualContribution} % otherwise use the standard text.

\begin{abstract}
Interpretable time series prediction is crucial for safety-critical areas such as healthcare and autonomous driving. Most existing methods focus on interpreting predictions by assigning important scores to segments of time series. In this paper, we take a different and more challenging route and aim at developing a self-interpretable model, dubbed Counterfactual Time Series (CounTS), which generates counterfactual and actionable explanations for time series predictions. Specifically, we formalize the problem of time series counterfactual explanations, establish associated evaluation protocols, and propose a variational Bayesian deep learning model equipped with counterfactual inference capability of time series abduction, action, and prediction. Compared with state-of-the-art baselines, our self-interpretable model can generate better counterfactual explanations while maintaining comparable prediction accuracy. Code will be available at \href{https://github.com/Wang-ML-Lab/self-interpretable-time-series}{https://github.com/Wang-ML-Lab/self-interpretable-time-series}.
\end{abstract}

\section{Introduction}
\label{introduction}

% 先写text，之后会加refs
Deep learning (DL) has become increasingly prevalent, and there is naturally a growing need for understanding DL predictions in many decision-making area, such as healthcare diagnosis and public policy-making. The high-stake nature of these areas means that these DL predictions are considered trustworthy only when they can be well explained. Meanwhile, time-series data has been frequently used in these areas~\cite{zhao2021assessment,jin2022domain,yang2022artificial}, but it is always challenging to explain a time-series prediction due to the nature of temporal dependency and varying patterns over time. Moreover, time-series data often comes with confounding variables that affect both the input and output, making it even harder to explain predictions from DL models. %Researchers have proposed many interpretation methods, but these methods are likely to fail due to the intrinsic complexity of time series and/or the existence of confounding variables. %Our goal is to find an method to identify the confounders and give counterfactual explanations for time-series prediction model. 

On the other hand, many existing explanation methods are based on assigning importance scores for different parts of the input to explain model predictions ~\citep{LIME, SHAP, FeatureSelector, BiasAttribution, LearningPrior, Regularizing}. 
% attributing the features of input toward the prediction. 
However, understanding the contribution of different input parts are usually not sufficiently informative for decision making: people often want to know what changes made to the input could have lead to a specific (desirable) prediction ~\citep{WachterCF,VisualCF,CounteRGAN}. We call such changed input that could have shifted the prediction to a specific target \emph{actionable counterfactual explanations}. Below we provide an example in the context of time series. 
% and we will call a explanation method is \emph{actionable} if it can provide such explanations. 
% Most gradient-based explanation methods are naturally actionable since they explain the prediction by updating the input following the gradient. 
% One example of actionable explanation is as follows. 
% 一个问题是下面的例子用了counterfactual的概念，但我没有很好的方法在上面的introduction中引入counterfactual

% \begin{example}[\textbf{Targeted Counterfactual Explanation}]\label{exa:targeted}
\begin{example}[\textbf{Actionable Counterfactual Explanation}]\label{exa:targeted}
Suppose there is a model that takes as input a time series of breathing signal $\x\in\RB^{T}$ from a subject of age $u=60$ to predict the corresponding sleep stage as $y^{pred}=\mbox{`Awake'}\in\{\mbox{`Awake'},\mbox{`Light Sleep'},\mbox{`Deep Sleep'}\}$. 
Typical methods assign importance scores to each entry of $\x$ to explain the prediction. However, they do not provide \emph{actionable} counterfactual explanations on how to modify $\x$ to $\x^{cf}$ such that the prediction can change to $y^{cf}=\mbox{`Deep Sleep'}$. An ideal method with such capability could provide more information on why the model make specific predictions. 
% the reason of the model prediction. 
% better explain model predictions. 
% In this case, age $u$ is a confounder that affects both $\x$ and $y$ since elderly people (i.e., larger $u$) are more likely to have irregular breathing $\x$ and more `Awake' time (i.e., $y=\mbox{`Awake'}$) at night. Given the breathing signal $\x\in\RB^{T}$ with predicted label $y^{pred}=\mbox{`Awake'}$, to generate a counterfactual explanation to change $y^{pred}$ to $\mbox{`Deep Sleep'}$, typical methods tend to suggest decreasing the age $u$ from $60$ to $50$, which is \emph{meaningless} (since age cannot be changed in practice). An ideal method would first infer the age $u$ and search for a \emph{meaningful} counterfactual explanation $\x^{cf}$ that could change $y^{pred}$ to $\mbox{`Deep Sleep'}$ while keeping $u$ unchanged. 
\end{example}

Actionable counterfactual explanations help people understand how to achieve a counterfactual (target) output by modifying the current model input. However, such explanations may not be sufficiently informative in practice, especially under the causal effect of confounding variables which are often immutable. Specifically, some variables can hardly be changed once its value has been determined, and suggesting changing such variables are both meaningless and infeasible (e.g., a patient age and gender when modeling medical time series). This leads to a stronger requirement: a good explanation should make as few changes as possible on immutable variables; we call such explanations \emph{feasible counterfactual explanations}. Below we provide an example in the context of time series.

% Actionable counterfactual explanations help people understand how to get a targeted output by modifying a current input. However, such explanations are not always informative enough in guiding a real practice, especially under the causal effect of confounding variables. Some variables can hardly be changed once its value have been determined and suggesting changing such variables are both meaningless and infeasible. This leads to a higher level requirement that a good explaination should make less changes as possible on immutable variables and we will call this a feasible explanation.

\begin{example}[\textbf{Feasible Counterfactual Explanation}]\label{exa:feasible}
% Suppose there is a model that takes as input a time series of breathing signal $\x\in\RB^{T}$ from a subject of age $u=60$ to predict the corresponding sleep stage as $y^{pred}=\mbox{`Awake'}\in\{\mbox{`Awake'},\mbox{`Light Sleep'},\mbox{`Deep Sleep'}\}$. 
% In this case, 
In~\exaref{exa:targeted}, 
age $u$ is a confounder that affects both $\x$ and $y$ since elderly people (i.e., larger $u$) are more likely to have irregular breathing $\x$ and more `Awake' time (i.e., $y=\mbox{`Awake'}$) at night. 
% Given the breathing signal $\x\in\RB^{T}$ with predicted label $y^{pred}=\mbox{`Awake'}$, t
To generate a counterfactual explanation to change $y^{pred}$ to $\mbox{`Deep Sleep'}$, typical methods tend to suggest decreasing the age $u$ from $60$ to $50$, which is \emph{infeasible} (since age cannot be changed in practice). An ideal method would first infer the age $u$ and search for a \emph{feasible} counterfactual explanation $\x^{cf}$ that could change $y^{pred}$ to $\mbox{`Deep Sleep'}$ while keeping $u$ unchanged. 
\end{example}

In this paper, we propose a self-interpretable time series prediction model, dubbed Counterfactual Time Series (CounTS), which can both (1) perform time series predictions and (2) provide actionable and feasible counterfactual explanations for its predictions. Under common causal structure assumptions, our method is guaranteed to identify the causal effect between the input and output in the presence of exogenous (confounding) variables, thereby improving the generated counterfactual explanations' feasibility. Our contribution is summarized as follows:

% In this paper, we propose a self-interpretable time series prediction model that can both perform time series prediction task and give actionable counterfactual explanations to its prediction result. Under the causal structure assumption, our method is guaranteed to identify the causal effects and generate feasible explanations. Our contribution is summarized as follows:
% 下面的summary感觉和上面这段有不少重合，不知道这样合适吗
% \begin{compactitem}
\begin{itemize}
    \item We identify the actionability and feasibility requirements for generating counterfactual explanations for time series models and develop the first general self-interpretable method, dubbed CounTS, that satisfies such requirements. 
    \item We provide theoretical guarantees that CounTS can identify the causal effect between the time series input and output in the presence of exogenous (confounding) variables, thereby improving feasibility in the generated explanations. 
    \item Experiments on both synthetic and real-world datasets show that compared to state-of-the-art methods, CounTS significantly improves performance for generating 
    % actionable and feasible 
    counterfactual explanations while still maintaining comparable prediction accuracy. 
    
    % \item We pose the actionability and feasibility requirements for a good explanation and further proposed a Bayesian deep learning model that is able to do time series prediction and give both actionable and feasible explanation using gradient method.
    % \item We theoretically proved that the causal effects are identifiable with our model which guarantees the feasibility.
    % \item By quantitative and qualitative analysis, our method shows competitive performance in time series prediction and generate more actionable and feasible explanation comparing to the state-of-the-art methods on synthetic and real-world dataset.
\end{itemize}

\section{Related Work}
\label{relatedwork}

\textbf{Interpretation Methods for Neural Networks.} 
Various attribution-based interpretation methods have been proposed in recent years. Some methods focused on local interpretation \citep{LIME, SHAP, MAPLE, FeatureSelector, BiasAttribution} while others are designed for global interpretation \citep{ACE, MAME}. The main idea is to assign attribution, or importance scores, to the input features in terms of their impact on the prediction (output). For example, such importance scores can be computed using gradients of the prediction with respect to the input \citep{GradCAM, GradSHAP, DEEPLIFT, IntegratedGradient}. 
Some interpretation methods are specialized for time series data; these include perturbation-based \citep{SeriesSaliency}, rule-based \citep{LIMREF}, and attention-based methods \citep{UncertaintyAware,TFT}. One typical method, Feature Importance in Time (FIT), evaluates the importance of the input data based on the temporal distribution shift and unexplained distribution shift \cite{FIT}. However, these methods can only produce importance scores of the input features for the current prediction and therefore cannot generate counterfactual explanations (see~\secref{sec:exp} and~\appref{sec:app_add} for empirical results). 
% neither counterfactual nor actionable. 

% Various attribution-based interpretation methods have been proposed in recent years. Some methods focused on local interpretation \citep{LIME, MAPLE, SHAP, FeatureSelector, BiasAttribution} while the other are designed for global intepretation \citep{ACE, MAME}. The main idea is assigning attribution to the input features in terms of their impacts on the output. One typical method is to assign the attribution using gradient with respect to the output \citep{GradCAM, GradSHAP, DEEPLIFT, IntegratedGradient}. 
% Some interpretation methods are specialized for time series data, using perturbation-based \citep{SeriesSaliency}, rule-based \citep{LIMREF}, attention-based methods \citep{TFT, UncertaintyAware}. However, these methods are neither counterfactual nor actionable. One closely related method evaluates the importance of the input data based on the temporal distribution shift and unexplained distribution shift \cite{FIT}.

\textbf{Counterfactual Explanations for Time Series Models.} 
There also works that generate counterfactual explanations for time series models. 
\cite{CAP} proposed an association-rule algorithm to explain time series prediction by finding the frequent pairs of timestamps and generating counterfactual examples. 
\cite{CounteRGAN} proposed a general explanation framework that generates counterfactual examples using residual generative adversarial networks (RGAN); it can be adapted for time series models. However, these works either fail to generate realistic counterfactual explanations (due to discretization error) or fail to generate feasible counterfactual explanations for time series models. In contrast, our CounTS as a principled variational causal method~\cite{ICL,GenInt,gupta2021correcting} can naturally generate realistic and feasible counterfactual explanations. Such advantages are empirically verified in~\secref{sec:exp}. 

\textbf{Bayesian Deep Learning and Variational Autoencoders.} 
Our work is also related to the broad categories of variational autoencoders (VAEs)~\cite{VAE} (which use inference networks to approximate posterior distributions) and Bayesian deep learning (BDL)~\cite{CDL,BDL,BDLThesis,RPPU,BIN,BDLSurvey,ZESRec} models (which use a deep component to process high-dimensional signals and a task-specific/graphical component to handle conditional/causal dependencies). \cite{OrphicX} proposed the first VAE-based model for generating causal explanations for graph neural networks.  \cite{CEVAE,DeepSCM} proposed the first VAE-based models for performing causal inference and estimating treatment effect. However, none of them addressed the problem of counterfactual explanation, which involves solving an inverse problem to obtain the optimal counterfactual input. In contrast, our CounTS is the first VAE-based model to address this challenge, with theoretical guarantees and promising empirical results. From the perspective of BDL~\cite{BDL,BDLSurvey}, CounTS uses deep neural networks to process high-dimension signals (i.e., the deep component in~\cite{BDL}) and uses a Bayesian network to handle the conditional/causal dependencies among variables (i.e., the task-specific or graphical component in~\cite{BDL}). Therefore, CounTS is also the first BDL model for generating counterfactual explanations. 

% A association-rule altorithm is proposed to explain time series prediction by finding the frequent pairs of timestamps and generating counterfactual examples \cite{CAP}. This method failed in generating realistic counterfactuals since it divide the amplitude of time series signal into discrete bins. \cite{CounteRGAN} proposed a general explanation framework by generating counterfactual examples using residual generative adversarial network (GAN) and it can be easily adapted to time series tasks. This method can generate realistic counterfactuals but sometimes are also infeasible.
% Our method's advantage over these two methods can be empirically verified on \emph{synthetic} and \emph{real-world dataset} with quantitative and qualitative analysis.

% A association-rule altorithm is proposed to explain time series prediction by finding the frequent pairs of timestamps and generating counterfactual examples \cite{CAP}. This method failed in generating realistic counterfactuals since it divide the amplitude of time series signal into discrete bins. \cite{CounteRGAN} proposed a general explanation framework by generating counterfactual examples using residual generative adversarial network (GAN) and it can be easily adapted to time series tasks. This method can generate realistic counterfactuals but sometimes are also infeasible.
% Our method's advantage over these two methods can be empirically verified on \emph{synthetic} and \emph{real-world dataset} with quantitative and qualitative analysis.

\section{Preliminaries}
\label{preliminaries}

\textbf{Causal Model.}
Following the definition in \cite{Causality}, a causal model is described by a 3-tupple $M = \left \langle  U, V, F\right \rangle$. $U$ is a set of exogenous variables $\{u_1,\dots,u_m\}$ that is not determined by any other variables in this causal model. $V$ is a set of endogenous variables $\{v_1,\dots,v_n\}$ that are determined by variables in $U \cup V$. We assume the causal model can be factorized according to a directed graph where each node represents one variable. $F$ is a set of functions $\{f_1,\dots,f_n\}$ describing the generative process of $V$: 
\begin{equation*}
v_i = f_i(pa_i, u_i), i=1,\dots,n,
\end{equation*}
where $pa_i$ denotes direct parent nodes of $v_i$. 

\textbf{Counterfactual Inference.} 
Counterfactual inference is interested in questions like ``observing that $X=x$ and $Y=y$, what would be probability that $Y=y^{cf}$ if the input $X$ had been $x^{cf}$?". Formally, given a causal model $\left \langle  U, V, F\right \rangle$ where $Y, X \in V$, counterfactual inference proceeds in three steps~\cite{Causality}:
\begin{compactenum}
    \item \textbf{Abduction.} Calculate the posterior distribution of $u$ given the observation $X=x$ and $Y=y$, i.e., $P(u|X=x,Y=y)$. 
    \item \textbf{Action.} Perform causal intervention on the variable $X$, i.e., $do(X=x^{cf})$. 
    \item \textbf{Prediction.} Calculate the counterfactual probability $P(Y_{X=x^{cf}}(u)=y^{cf})$ with respect to the posterior distribution of $P(u|X=x, Y=y)$.
\end{compactenum}
Putting the three steps together, we have
\begin{align*}
&P(Y_{X=x^{cf}}=y^{cf}|X=x,Y=y) \\
= &\sum\nolimits_u P(Y_{X=x^{cf}}(u)=y^{cf})P(u|x,y),
\end{align*}
where $x^{cf}$ can be an counterfactual explanation describing what would have shifted the outcome $Y$ from $y$ to $y^{cf}$. 
% \end{enumerate}

\section{Method}
In this section, we formalize the problem of counterfactual explanations for time series prediction and describe our proposed method for the problem. 

\textbf{Problem Setting.} 
% As a self-interpretable time series model, we will focus on two levels of task: time series data prediction and providing counterfactual explanation on the prediction. 
We focus on generating counterfactual explanations for predictions from time series models. We assume the model takes as input a multivariate time series $\x_i \in \RB^{D \times T}$ and predicts the corresponding label $\y_i$, which can be a categorical label, a real value, or a time series $\y_i\in\RB^{T}$. Given a specific input $\x_i$ and the model's prediction $\y_i^{pred}$, our goal is to explain the model by finding a counterfactual time series $\x_i^{cf}\neq \x_i$ that could have lead the model to an alternative (counterfactual) prediction $\y_i^{cf}$. 

% Given time series data $\x_i \in \RB^{D \times T}$, our problem is to predict the corresponding label, which is either $\y_i \in \RB$ or $\y_i \in \RB^{T}$.
% Moreover, given a specific input $\x_i$, it's prediction $\y_i^{pred}$ and a target label $\y_i^{cf}$ with the same dimension as $\y_i$, we are going to explain the model by finding a $\x_i^\prime$ that yields a counterfactual outcome $\y_i^{cf}$ if the input had been $\x_i^\prime$.

\begin{figure}[t]
\centering
\includegraphics[width=0.48\textwidth]{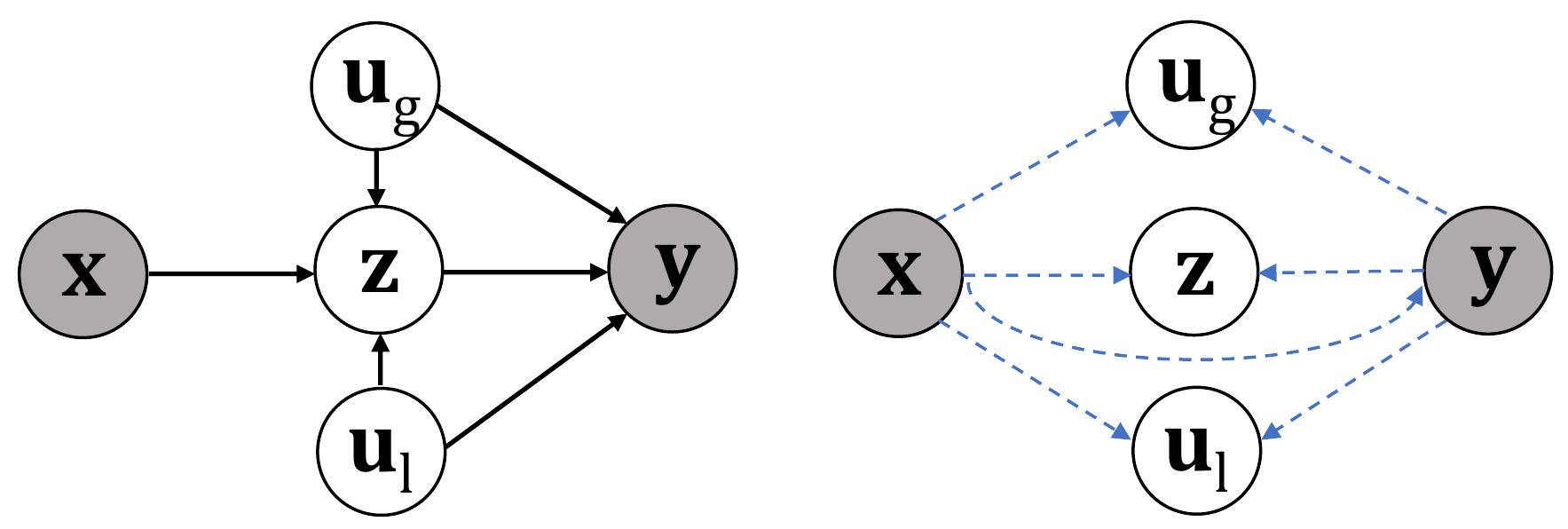}
% \hspace{12pt}
% \begin{center}
% \begin{subfigure}[t]{0.2\textwidth}
%     \centering
%     \includegraphics[width=\textwidth]{img/causal}
%     % \label{fig:scm}
% \end{subfigure}
% \hspace{10pt}
% \begin{subfigure}[b]{0.2\textwidth}
%     \centering
%     \includegraphics[width=\textwidth]{img/pgm_shaded.png}
%     % \label{fig:pgm}
% \end{subfigure}
% \end{center}
\caption{\label{fig:causal_graph}\textbf{Left:} The causal graph and generative model for CounTS. \textbf{Right:} Inference model for CounTS.}
\end{figure}

\subsection{Learning of CounTS}

\textbf{Causal Graph and Generative Model for CounTS.} 
Our self-interpretable model CounTS is based on the causal graph in~\figref{fig:causal_graph}(left), where $\x\in\RB^{D\times T_{in}}$ is the input time series, $\y$ is the label, $\z\in\RB^{H_z\times T_{mid}}$ is the representation of $\x$, $\u_l\in\RB^{H_l\times T_{mid}}$ is the local exogenous variable, and $\u_g\in\RB^{H_g}$ is the global exogenous variable. Both $\u_l$ and $\u_g$ are confounder variables; $\u_l$ can take different values in different time steps, while $\u_g$ is shared across all time steps.  

This causal graph assumes the following factorization of the generative model $p_\theta(\y,\u_l,\u_g,\z|\x)$:
\begingroup\makeatletter\def\f@size{7.5}\check@mathfonts
\def\maketag@@@#1{\hbox{\m@th\normalsize\normalfont#1}}%
\begin{align}
p_\theta(\y,\u_l,\u_g,\z|\x) = p_\theta(\u_l,\u_g)  p_\theta(\z | \u_l,\u_g, \x)p_\theta(\y| \u_l,\u_g, \z),\label{eq:p_factor}
\end{align}
\endgroup
where $\tha$ denotes the collection of parameters for the generative model, and $p_\theta(\u_l,\u_g)=p_\theta(\u_l)p_\theta(\u_g)$. Here $p_\theta(\z | \u_l,\u_g, \x)$ and $p_\theta(\y| \u_l,\u_g, \z)$ are the encoder and the predictor, respectively. Specifically, we have
\begin{align*}
p_\theta(\u_l) &= \NM(\0,\I),~~~~p_\theta(\u_g) = \NM(\0,\I), \\
p_\theta(\z | \u_l,\u_g, \x) &= \NM(\mu_z(\u_l,\u_g, \x; \tha), \sigma_z(\u_l,\u_g,\x; \tha)), \\
p_\theta(\y | \u_l,\u_g, \z) &= \NM(\mu_y(\u_l,\u_g, \z; \tha), \sigma_y(\u_l,\u_g,\z; \tha)),
\end{align*}
where $\mu_z(\cdot; \cdot)$, $\sigma_z(\cdot; \cdot)$, $\mu_y(\cdot; \cdot)$, $\sigma_y(\cdot; \cdot)$ are neural networks parameterized by $\tha$. Note that for classification models the predictor $p_\theta(\y | \u_l,\u_g, \z)$ becomes a categorical distribution $Cat(f_y(\u_l,\u_g, \z; \tha))$, where $f_y(\cdot;\cdot)$ is a neural network. 

\textbf{Inference Model for CounTS.} 
We use an inference model $q_\phi(\y, \u_l,\u_g,\z|\x)$ to approximate the posterior distribution of the latent variables, i.e., $p_\theta(\u_l,\u_g,\z|\x)$. As shown in~\figref{fig:causal_graph}(right), we factorize $q_\phi(\y, \u_l,\u_g,\z|\x)$ as
\begingroup\makeatletter\def\f@size{9.5}\check@mathfonts
\def\maketag@@@#1{\hbox{\m@th\normalsize\normalfont#1}}%
\begin{align}
q_\phi(\y, \u_l,\u_g,\z|\x) &= q_\phi(\y|\x)q_\phi(\u_l,\u_g| \x, \y)q_\phi(\z | \x, \y),\label{eq:variational}\\
q_\phi(\u_l,\u_g| \x, \y)&=q_{\phi}(\u_l|\x, \y)q_{\phi}(\u_g|\x, \y)
\end{align}
\endgroup
where $\ph$ is the collection of the inference model's parameters, and . We parameterized each factor in~\eqnref{eq:variational} as
\begin{align*}
q_{\phi}(\y|\x) &=\NM(\mu_y(\x; \ph), \sigma_y^2(\x; \ph)), \\
q_{\phi}(\u_l|\x, \y) &= \NM(\mu_{u_l}(\x, \y; \ph), \sigma_{u_l}^2(\x, \y; \ph)), \\
q_{\phi}(\u_g|\x, \y) &= \NM(\mu_{u_g}(\x, \y; \ph), \sigma_{u_g}^2(\x, \y; \ph)), \\
q_{\phi}(\z|\x, \y) &= \NM(\mu_z(\x, \y; \ph), \sigma_z^2(\x, \y; \ph)),
\end{align*}
where $\mu_{\cdot}(\cdot;\cdot)$ and $\sigma_{\cdot}(\cdot;\cdot)$ denote neural networks with $\ph$ as their parameters.

\textbf{Evidence Lower Bound.} 
Our CounTS uses the evidence lower bound (ELBO) $\LM_{ELBO}$ of the log likelihood $\log p(\y|\x)$ as an objective to learn the generative and inference models. %, i.e., $p_\theta(\y,\u_l,\u_g,\z|\x)$ and $q_\phi(\u_l,\u_g,\z, \y|\x)$. 
Maximizing the ELBO is equivalent to learning the optimal variational distribution $q_\phi(\u_l,\u_g,\z|\x)=\int q_\phi(\y, \u_l,\u_g,\z|\x) d\y$ that best approximates the posterior distribution of the label and latent variables $p_\theta(\u_l,\u_g,\z|\x)$. Specifically, we have
\begin{align}
     \LM_{ELBO} = &~\EB_{q_\phi(\y, \u_l,\u_g,\z|\x)}[p_{\theta}(\y, \u_l,\u_g,\z|\x)] \nonumber\\
     &- \EB_{q_\phi(\y, \u_l,\u_g,\z|\x)}[q_\phi(\y, \u_l,\u_g,\z|\x)].\label{eq:elbo_simple}
\end{align}
Note that different from typical ELBOs, we explicitly involves $\y$ and use $q_\phi(\y, \u_l,\u_g,\z|\x)$ rather than $q_\phi(\u_l,\u_g,\z|\x)$  (see~\appref{sec:app_elbo} for details); this is to expose the factor $q_\phi(\y|\x)$ in~\eqnref{eq:variational} to allow for additional supervision on $\y$ (more details below). With the factorization in~\eqnref{eq:p_factor} and~\eqnref{eq:variational}, we can decompose \eqnref{eq:elbo_simple} as:
\begingroup\makeatletter\def\f@size{9.2}\check@mathfonts
\def\maketag@@@#1{\hbox{\m@th\normalsize\normalfont#1}}%
\begin{align}
\LM&_{ELBO}=~\EB_{q_\phi(\y|\x)}\EB_{q_\phi(\u_l,\u_g, \z| \x,\y)}[p_\theta(\y| \u_l,\u_g, \z)]\label{eq:elbo_y_}\\
&-\EB_{q_\phi(\y,\u_l,\u_g|\x)}\big[KL[q_{\phi}(\z|\x,\y)||p_\theta(\z | \u_l,\u_g, \x)]\big]\label{eq:elbo_z}\\
&- \EB_{q_\phi(\y|\x)}\big[KL[q_\phi(\u_l,\u_g| \x, \y)||p_\theta(\u_l,\u_g)]\big]\label{eq:elbo_u}\\
&- \EB_{q_{\phi}(\y|\x)}[q_{\phi}(\y|\x)],\label{eq:elbo_y}
\end{align}
\endgroup
where each term is computed with our neural network parameterization; \figref{fig:arch}(left) shows the network structure. Below we briefly discuss the intuition of each term. 
\begin{compactenum}
\item[(1)] \textbf{\eqnref{eq:elbo_y_} predicts} the label $\y$ using $\u_l$, $\u_g$, and $\z$ inferred from $\x$ ($\y$ is marginalized out).
\item[(2)] \textbf{\eqnref{eq:elbo_z} regularizes} the inference model $q_{\phi}(\z|\x,\y)$ to get closer to the generative model $p_\theta(\z | \u_l,\u_g, \x)$. 
\item[(3)] \textbf{\eqnref{eq:elbo_u} regularizes} $q_{\phi}(\u_l,\u_g|\x,\y)$ using the prior distribution $p(\u_l,\u_g)$. 
\item[(4)] \textbf{\eqnref{eq:elbo_y} regularizes} $q_{\phi}(\y|\x)$ by maximizing its entropy, preventing it from collapsing to deterministic solutions. 
\end{compactenum}

\begin{figure*}[t]
    \centering
    \includegraphics[width=0.99\textwidth]{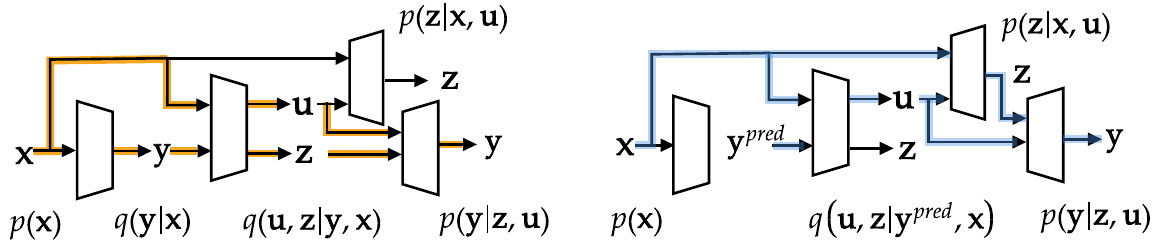}
    \vspace{-10pt}
    \caption{\label{fig:arch}Network structure. We omit subscripts of $p_\theta$ and $q_\phi$ for clarity. \textbf{Left:} CounTS makes \emph{predictions} on $\y$ using the yellow branch. \textbf{Right:} Given the current $\x$ and $\y^{pred}$, CounTS generates \emph{counterfactual explanation} $\x^{cf}$ for any target label $\y^{cf}$ using the blue branch.}
    \vspace{-11pt}
  \end{figure*}

\textbf{Final Objective Function.} 
Inspired by~\cite{CEVAE}, we include an additional term ($N$ is the training set size)
\begin{align}
\LM_y = \sum\nolimits_{i=1}^N\log q(\y_i|\x_i)
\end{align}
to supervise $q(\y|\x)$ using label $\y$ during training; this helps produce more accurate estimation for $\u_l$, $\u_g$, $\z$ using $q_\phi(\u_l,\u_g|\x,\y)$ and $q_\phi(\z|\x,\y)$. 
The final objective function then becomes
\begin{align}
\LM_{CounTS} = \LM_{ELBO} + \lambda \LM_y,\label{eq:final}
\end{align}
where $\lambda$ is a hyperparameter balancing both terms.

\subsection{Inference Using CounTS}

% The structure of our model is shown in \figref{fig:prediction}. In big pictures, our model is based on Variational Auto Encoder (VAE) and it is split into a lower branch and an upper branch and will be used for prediction and counterfactual inference (explanation) phase respectively.
% \hao{self interpretable}

After learning both the generative model (\eqnref{eq:p_factor}) and the inference model (\eqnref{eq:variational}) by maximizing $\LM_{CounTS}$ in~\eqnref{eq:final}, our \textbf{self-interpretable} CounTS can perform inference to (1) make \textbf{predictions} on $\y$ using the yellow branch in~\figref{fig:arch}(left), and (2) generate \textbf{counterfactual explanation} $\x^{cf}$ for any target label $\y^{cf}$ using the blue branch in~\figref{fig:arch}(right).

\subsubsection{Prediction}
% The forwarding paths in the prediction phase are colored yellow in \figref{fig:prediction}. 
We use the yellow branch in~\figref{fig:arch}(left) to \emph{predict} $\y$. Specifically, given an input time series $\x$, the encoder will first infer the \emph{initial label} $\y$ using $q_\phi(\y|\x)$ and then infer $(\u_l,\u_g)$ and $\z$ from $\x$ and $\y$ using $q_\phi(\u_l,\u_g,\z|\x,\y)$. $(\u_l,\u_g)$ and $\z$ are then fed into the predictor $p_\theta(\y|\u_l,\u_g,\z)$ to infer the \emph{final} label $\y$. Formally, CounTS predict the label as
\begin{align}
\y^{pred} = \EB_{q_\phi(\y|\x)}\EB_{q_\phi(\u_l,\u_g, \z| \x,\y)}[p_\theta(\y| \u_l,\u_g, \z)].
\end{align}
Empirically, we find that directly using the means of $q_\phi(\y|\x)$ and $q_\phi(\u_l,\u_g, \z| \x,\y)$ as input to $p_\theta(\y| \u_l,\u_g, \z)$ already achieves satisfactory accuracy. 
% \LM&_{ELBO}=~\label{eq:elbo_y}\

% \hao{highlight our self-interpretable point}

% Notice that the module $q(y|x)$ can be seen as any other predictors that directly model the distribution $p(y|x)$, we will use the first module as a vanilla baseline in the following experiments.

% \subsubsection{Counterfactual Inference and Explanation}

\subsubsection{Generating Counterfactual Explanation}
We use the blue branch in~\figref{fig:arch}(right) to generate counterfactual explanations via counterfactual inference. Our goal is to find the optimal counterfactual explanation $\x^{cf}$ defined below.

\begin{definition}[\textbf{Optimal Counterfactual Explanation}]\label{def:counter}
    Given a factual observation $\x$ and prediction $\y^{pred}$, the optimal counterfactual explanation $\x^{cf}$ for the counterfactual outcome for $\y^{cf}$ is
\begin{align*}
\x^{cf} = \argmax\nolimits_{\x'}~ p(Y_{\x=\x'}(\u)=\y^{cf}),
\end{align*}
where $\u=(\u_l,\u_g)$ and the counterfactual likelihood is defined as
\begingroup\makeatletter\def\f@size{9.5}\check@mathfonts
\def\maketag@@@#1{\hbox{\m@th\normalsize\normalfont#1}}%
\begin{align}
&p(Y_{\x=\x^{\prime}}(\u)=\y^{cf}) \label{eq:cf} \\
&= \sum_{\u} p\left(\y=\y^{cf} | do\left(\x=\x^{\prime}\right), \u\right) p(\u | \x=\x, \y=\y^{pred}). \nonumber
\end{align}
\endgroup
% \begin{align*}
% \x^{cf} = \argmax\nolimits_{\x^{cf}}~ p(Y_{\x=\x^{cf}}(\u)=\y^{cf}),
% \end{align*}
% where the the counterfactual likelihood is defined as
% \begin{align*}
% &p(Y_{\x=\x^{cf}}(\u)=\y^{cf}) \\&= \sum_{\u} p\left(\y=\y^{cf} | do\left(\x=\x^{cf}\right), \u\right) p(\u | \x=\x, \y=\y^{pred}) \label{eq:cf}
% \end{align*}
\end{definition}

In words, we search for the optimal $\x^{cf}$ that would have shifted the model prediction from $\y^{pred}$ to $\y^{cf}$ while keeping $(\u_l,\u_g)$ unchanged. Since the definition of counterfactual explanations in~\defref{def:counter} involves causal inference with the intervention on $\x$, we need to first \emph{identify} the causal probability $p(\y=\y^{cf} | do(\x=\x^{\prime}), \u)$ using observational probability, i.e., removing `do' in the equation. The theorem below shows that this is achievable. 

% The forwarding paths in the counterfactual inference phase are colored blue in \figref{fig:cf}. Suppose we have an input data $\x$ with its corresponding prediction $\y$ obtained during its prediction phase, we are going to follow the "abduction-action-prediction" steps and back-propagation to explain what modifications on input $\x$ may cause our model to change its prediction from $y$ to a output of interest target $\y^\prime$. 

% Firstly, we initialize an input $x^{\prime = x}$ and then calculate the counterfactual of $Y=y^{\prime}$ if the $x$ had been $x^{\prime}$ given the factual observation of $X=x$ and $Y=0$. Then we calculate the difference between the output $Y=y^{\prime}$ and our target inverse prediction $Y=1$ and use optimization methods to decrease the gap iteratively. All model parameters are frozen during counterfactual thus only the input $X=x^{\prime}$ is meant to be optimized during each back-propagation. 

% \begin{theorem}
%     Given the posterior distribution of exogenous variable $p(\u|\x,\y)$, the effect of action $p\left(\y=\y^{cf} | do\left(\x=\x^{cf}\right), \u\right)$ can be identified using $\EB_{p\left(\z | \x^{cf}, \u\right)} p\left(\y^{cf} | \z, \u\right)$.
% \end{theorem}
\begin{theorem}[\textbf{Identifiability}]\label{thm:do}
    Given the posterior distribution of exogenous variable $p(\u_l,\u_g|\x,\y)$, the effect of action $p(\y=\y^{cf} | do(\x=\x^{\prime}), \u_l,\u_g)$ can be identified using $\EB_{p(\z | \x^{\prime}, \u_l,\u_g)} [p\left(\y^{cf} | \z, \u_l,\u_g\right)]$.
\end{theorem}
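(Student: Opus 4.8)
The plan is to exploit the causal graph in \figref{fig:causal_graph}(left), in which $\z$ is the only mediator between $\x$ and $\y$, and $(\u_l,\u_g)$ are the exogenous confounders feeding into both $\z$ and $\y$. Since we condition on $(\u_l,\u_g)$ throughout, the key observation is that once we fix $(\u_l,\u_g)$, the only remaining causal path from $\x$ to $\y$ is $\x \to \z \to \y$. Intervening on $\x$ via $do(\x=\x')$ removes all incoming arrows to $\x$, but because $\x$ has no parents in the graph other than (possibly) nothing — it is a root alongside the exogenous variables — the interventional and observational distributions of the downstream variables given $\x=\x'$ and $(\u_l,\u_g)$ coincide. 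Concretely, I would write
\begin{align*}
p(\y=\y^{cf}\mid do(\x=\x'),\u_l,\u_g) = \sum\nolimits_{\z} p(\y=\y^{cf}\mid \z,\u_l,\u_g)\, p(\z\mid \x',\u_l,\u_g),
\end{align*}
which is exactly $\EB_{p(\z\mid \x',\u_l,\u_g)}[p(\y^{cf}\mid\z,\u_l,\u_g)]$.

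The key steps, in order, are: (1) apply the truncated factorization (g-formula) for the post-intervention distribution under $do(\x=\x')$, using the generative factorization in \eqnref{eq:p_factor}: the interventional joint becomes $p_\theta(\u_l,\u_g)\,p_\theta(\z\mid\u_l,\u_g,\x')\,p_\theta(\y\mid\u_l,\u_g,\z)$ with the factor for $\x$ deleted and $\x$ set to $\x'$; (2) condition on $(\u_l,\u_g)$, which is legitimate since $(\u_l,\u_g)$ are not descendants of $\x$ and the intervention does not alter $p_\theta(\u_l,\u_g)$; (3) marginalize out the mediator $\z$, yielding the stated expectation; (4) note that the conditional densities $p_\theta(\z\mid\u_l,\u_g,\x')$ and $p_\theta(\y\mid\u_l,\u_g,\z)$ appearing here are precisely the encoder and predictor of the learned generative model, so the right-hand side is computable — no unobserved quantity remains once the posterior $p(\u_l,\u_g\mid\x,\y)$ is supplied (which it is, by hypothesis). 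One should also remark that because $\x$ is exogenous-like (a root node), there is no back-door path from $\x$ to $\y$ left to block after conditioning on $(\u_l,\u_g)$, so no further adjustment is needed; this is the content of the "common causal structure assumptions" invoked in the introduction.

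I expect the main obstacle to be precisely the justification of step (2)–(3): one must argue carefully that conditioning on the confounders $(\u_l,\u_g)$ before marginalizing $\z$ is valid, i.e., that $(\u_l,\u_g) \perp do(\x=\x')$ in the mutilated graph and that $\z$'s mechanism $p_\theta(\z\mid\u_l,\u_g,\x)$ is invariant under the intervention (modularity / autonomy of structural mechanisms). For discrete $\y$ and continuous $\z$ the sum over $\z$ becomes an integral, which is a cosmetic change. A secondary subtlety is making explicit that $\x$ has no other endogenous parents in \figref{fig:causal_graph}(left), so that $do(\x=\x')$ simply substitutes $\x'$ into the $\z$-mechanism without introducing selection effects; this is where the causal graph assumption does the real work, and it is worth stating as the crux of the argument rather than burying it in the algebra.
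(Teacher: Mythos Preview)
Your proposal is correct but takes a genuinely different route from the paper. You argue via the \emph{truncated factorization} (g-formula): since $\x$ is a root in the causal graph of \figref{fig:causal_graph}(left), deleting the $\x$-factor under $do(\x=\x')$ leaves $p_\theta(\u_l,\u_g)\,p_\theta(\z\mid\u_l,\u_g,\x')\,p_\theta(\y\mid\u_l,\u_g,\z)$, and then conditioning on $(\u_l,\u_g)$ (non-descendants of $\x$) and marginalizing $\z$ gives the result directly. The paper instead invokes Pearl's \emph{do-calculus} explicitly: it marginalizes $\z$ first, then applies Rule~2 to exchange $do(\x')$ for $\x'$ in $p(\z\mid do(\x'),\u)$ and simultaneously replace observing $\z$ by $do(\z)$ in the predictor term, then Rule~3 to delete the now-inert $do(\x')$ (since $\x$ affects $\y$ only through $\z$, which is intervened), and finally Rule~2 again to convert $do(\z)$ back to conditioning on $\z$. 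Your argument is more elementary and arguably more transparent about \emph{why} identification holds (namely, $\x$ is exogenous and $(\u_l,\u_g)$ block all back-door paths), whereas the paper's do-calculus chain makes the graphical independence conditions being used at each step more explicit and would generalize more mechanically if the graph were altered. Both routes rely on exactly the same structural assumption you flagged as the crux: that $\x$ has no endogenous parents in the assumed graph.
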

% \begin{theorem}[\textbf{Identifiability}]\label{thm:do}
%     Given the posterior distribution of exogenous variable $p(\u_l,\u_g|\x,\y)$, the effect of action $p(\y=\y^{cf} | do(\x=\x^{\prime}), \u_l,\u_g)$ can be identified using 
% \begin{align}
% \EB_{p(\z | \x^{\prime}, \u_l,\u_g)} [p\left(\y^{cf} | \z, \u_l,\u_g\right)]\label{eq:do}
% \end{align}
% \end{theorem}

% \begin{proof}
% With $\u=(\u_l,\u_g)$ and applying Rule 2 and 3 in do-calculus~\cite{Causality}, we have
% % \begin{equation}
% \begin{align}
% &p\left(\y^{cf} | do\left(\x=\x^{\prime}\right), \u\right) \nonumber\\
% &\ \  =\int_{\z} p\left(\y^{cf} | \z, do\left(\x^{\prime}\right), \u\right) p\left(\z | do\left(\x^{\prime}\right), \u\right) d\z \nonumber\\
% &\ \  =\EB_{p\left(\z | do\left(\x^{\prime}\right), \u\right)} \left [ p\left(\y^{cf} | \z, do\left(\x^{\prime}\right), \u\right) \right ]\nonumber\\
% &\overset{\mathrm{Rule 2}}{=}\EB_{p\left(\z | \x^{\prime}, \u\right)} \left [p\left(\y^{cf} | do(\z), do\left(\x^{\prime}\right), \u\right) \right ]\nonumber\\
% &\overset{\mathrm{Rule 3}}{=}\EB_{p\left(\z | \x^{\prime}, \u\right)} \left [p\left(\y^{cf} | do(\z), \u\right) \right ]\nonumber\\
% &\overset{\mathrm{Rule 2}}{=}\EB_{p\left(\z | \x^{\prime}, \u\right)} \left [p\left(\y^{cf} | \z, \u\right) \right ],\label{eq:do}
% \end{align}
% concluding the proof. 
% \end{proof}
% remove proof to save space if needed
% where Rule 2 and 3 are from \cite{JudiaPearl}. 
See \appref{sec:proof} for the proof. With 
% \eqnref{eq:do} from~
\thmref{thm:do}, we can rewrite \eqnref{eq:cf} as
\begin{align}
\LM_{cf} = \EB_{p(\u | \x=\x, \y=\y^{pred})}\EB_{p(\z | \x^{\prime}, \u)}  [p(\y^{cf} | \z, \u) ],\label{eq:final}
\end{align}
where $\u=(\u_l,\u_g)$ and $p(\u | \x=\x, \y=\y^{pred})$ is approximated by $q_\phi(\u | \x=\x, \y=\y^{pred})$. We use Monte Carlo estimates to compute the expectation in \eqnref{eq:cf} and \eqnref{eq:final}, iteratively compute the gradient $\frac{\partial\LM_{cf}}{\partial\x'}$ (via back-propagation) to search for the optimal $\x'$ in a way similar to~\cite{BIN,ReverseAttack}, and use it as $\x^{cf}$ (see the complete algorithm in \appref{sec:app_algo}). 
% remove algorithm to save space if needed

% \begin{algorithm}[tb]
%    \caption{Generating Counterfactual Explanations}
%    \label{alg:cf}
% \begin{algorithmic}
%    \STATE {\bfseries Input:} data $\x$, threshold $\epsilon$, number of samples $n$ and $m$.
%    \STATE Sample $\y^{pred}$ from $q_\phi(\y|\x)$.
%    \STATE Set a target counterfactual output $\y^{cf} \neq \y^{pred}$
%    \FOR{$i=1$ {\bfseries to} $m$}
%    \STATE Sample $\u_i$ from $q_\phi(\u|\x,\y)$
%     \FOR{$j=1$ {\bfseries to} $n$}
%     \STATE Sample $\z_{ij} \sim p_\theta(\z|\x, \u_i)$
%     \STATE Sample $\y_{ij} \sim p_\theta(\y|\z_{ij}, \u_i)$
%     \ENDFOR
%    \ENDFOR
%    \STATE  Calculate $\y^{cf} = \frac{\sum_i \sum_j \y_{ij}}{n\times m}$
%    \STATE Set $\x^{cf} = \x$
%    \WHILE{$|\y-\y^{cf}| \geq \epsilon$}
%     % \STATE {\bfseries Update} $\delta = |\y-\y^\prime|$
%     % \STATE {\bfseries Compute} $\frac{\partial \delta}{\partial \x}$ and {\bfseries update} $\x$
%     \STATE Update $\x^{cf} = \x^{cf} - \lambda \frac{\partial \mid \y-\y^{cf} \mid}{\partial \x^{cf}}$
%    \ENDWHILE 
%    \STATE{\bfseries return} $\x^{cf}$
% \end{algorithmic}
% \end{algorithm}

\section{Experiments}\label{sec:exp}
In this section, we evaluate our CounTS and existing methods on two \emph{synthetic} and three \emph{real-world datasets}. For each dataset, we evaluate different methods in terms of three metrics: (1) \textbf{prediction accuracy}, (2) \textbf{counterfactual accuracy}, and (3) \textbf{counterfactual change ratio}, with the last one as the most important metric. These metrics take different forms for different datasets (see details in~\secref{sec:toy}-\ref{sec:real}). 

\subsection{Baselines and Implementations}
We compare our CounTS with state-of-the-art methods for generating explanations for deep learning models, including Regularized Gradient Descent (\textbf{RGD})~\cite{RGD}, Gradient-weighted Class Activation Mapping (\textbf{GradCAM})~\cite{GradCAM}, 
Gradient SHapley Additive exPlanations (\textbf{GradSHAP})~\cite{GradSHAP}, Local Interpretable Model-agnostic Explanations (\textbf{LIME})~\cite{LIME}, Feature Importance in Time (\textbf{FIT})~\cite{FIT}, Case-crossover APriori (\textbf{CAP})~\cite{CAP}, and Counterfactual Residual Generative Adversarial Network (\textbf{CounteRGAN})~\cite{CounteRGAN} (see~\appref{sec:app_baseline} for more details). Note that among these baselines, only RGD and CounteRGAN can generate actionable explanations. Other baselines, including FIT (which is designed for time series models), only provide importance scores as explanations; therefore some evaluation metrics may not be applicable for them (shown as `-' in tables). 

All methods above are implemented with PyTorch \cite{PyTorch}. For fair comparison, the prediction model in all the baseline explanation methods has the same neural network architecture as the inference module in our CounTS. See the Appendix for more details on the architecture, training, and inference. 
%$q_\phi(\y|\x)$

% \hao{please add all references} done

% \hao{please add some implementation detail as we discussed} done

\begin{table}[t]
\vskip -0.25cm
\caption{\textbf{Results on the toy dataset.} We mark the best result with \textbf{bold face} and the second best results with \underline{underline}.}
\label{tab:toy_result}
\setlength{\tabcolsep}{2pt}
\resizebox{0.48\textwidth}{!}{%
\begin{tabular}{cccccccccc}
\toprule
 & CounteRGAN & RGD & GradCAM & GradSHAP  & LIME & FIT & CAP & CounTS (Ours) \\ \midrule
Pred. Accuracy (\%) $\uparrow$ & \multicolumn{7}{c}{-------------------------- \quad\textbf{85.19}\quad --------------------------} & {\ul 83.26} \\
% Pred. Accuracy (\%) $\uparrow$ & \multicolumn{7}{c}{------------- \quad\textbf{85.19}\quad -------------} & {\ul 83.26} \\
CCR $\uparrow$ & {\ul 1.25} & 1.21 & 1.09 & 1.13  & 1.2 & 1.15 & 0.97 & \textbf{1.33} \\
Counterf. Accuracy (\%) $\uparrow$ & \textbf{78.75} & {\ul 77.96} & - & -  & - & 45.62 & - & 70.28 \\ \bottomrule
\end{tabular}%
}
\end{table}

% \subsection{Datasets}

\subsection{Toy Dataset}\label{sec:toy}
\textbf{Dataset Description.} 
% To better evaluate the actionability and feasibility of the explanations by CounTS and the baseline methods, 
We designed a toy dataset where the label is affected by only part of the input. A good counterfactual explanation should only modify this part of the input while keeping the other part unchanged. 
% by masking the input with an exogenous variable $\m$. 
% A good counterfactual explanation is expected to identify the mask and give explanation focusing on the decisive input (not masked by $\m$). 
Following the causal graph in \figref{fig:causal_graph}(left),  we have input $\x\in\RB^{12}$ with each entry independently sampled from $\mathcal{N} (\mu_x , \sigma^2_x)$ and an exogenous variable $u \in \RB$ (as the confounder) sampled from $\mathcal{N} ( \mu_u , \sigma^2_u)$. To indicate which part of $\x$ affects the label, we introduce a mask vector $\m \in \RB^{12}$ with first $6$ entries $\m_{1:6}$ set to $1$ and the last 6 entries $\m_{7:12}$ set to $0$. We then generate $\z\in \RB^{12}$ as $\z = \u \cdot(\m \odot \x) $ and the label $y \sim Bern(\sigma(\z^\top\1 + u))$ where $\sigma$ is the sigmoid function. Here $\x$'s first $6$ entries $\x_{1:6}$ is label-related and the last $6$ entries $\x_{7:12}$ is label-agnostic. 
% Detail can be find in Appendix \ref{}.

\textbf{Evaluation Metrics.} 
We use three evaluation metrics:
\begin{compactitem}
\item \textbf{Prediction Accuracy.} 
This is the percentage of time series correctly predicted ($y^{pred} = y$) in the test set. 
\item \textbf{Counterfactual Accuracy.} 
For a prediction model $f$, the generated counterfactual explanation $\x^{cf}$, and the target label $y^{cf}$, counterfactual accuracy is the percentage of time series where $\x^{cf}$ successfully change the model's prediction to $y^{cf}$ (i.e., $f(\x^{cf}) = y^{cf}$). 
\item \textbf{Counterfactual Change Ratio (CCR).} 
This measures how well the counterfactual explanation $\x^{cf}$ changes the label-related input $\x_{1:6}$ while keeping the label-agnostic input $\x_{7:12}$ unchanged. Formally we use the average ratio of $\frac{\|\x_{1:6}^{cf} - \x_{1:6}\|_1}{\|\x_{7:12}^{cf} - \x_{7:12}\|_1}$ across the test set. 
\end{compactitem}

Note that CCR is the most important metric among the three since our main focus is to generate actionable and feasible counterfactual explanations. 

\textbf{Quantitative Results.} 
% The benchmark is reported in \tabref{tab:toy_result}. 
\tabref{tab:toy_result} compares our CounTS with the baselines in terms of the three metrics. 
CounTS outperforms all the baselines in terms of CCR with minimal prediction accuracy loss. This shows that our CounTS successfully identifies and fixes the exogenous variables $u$ and $\m$ to generate actionable and feasible counterfactual explanations while still achieving prediction accuracy comparable to baselines. 
% and 
% doing counterfactual explanation, in other word, more actionable and feasible. 
% Meanwhile, we n
Note that actionable methods (i.e., CounTS, RGD, and CounteRGAN) outperforms importance score methods (i.e., FIT, LIME, and GradSHAP). This is expected because importance-score-based explanations can only explain the original prediction and therefore fail to infer what change on $\x$ will shift the prediction from $y^{pred}$ to $y^{cf}$. %lead to the change in the prediction.

Our counterfactual accuracy is lower than CounteRGAN and RGD methods. This is reasonable since these baselines do not infer and fix the posterior distribution $p(\u|\x,\y)$ and are therefore more flexible for the generator (or back-propagation) to modify their input to push $y^{pred}$ closer to the target $y^{cf}$. 
% be more realistic. 
However, such flexibility comes at the cost of low feasibility, reflected in their poor CCR performance. 

\begin{table}[t]
\vskip -0.25cm
\caption{\textbf{Results on the \emph{Spike} dataset.} We mark the best result with \textbf{bold face} and the second best results with \underline{underline}.}
\setlength{\tabcolsep}{2pt}
\label{tab:spike}
\resizebox{0.49\textwidth}{!}{%
\begin{tabular}{cccccccccc}
\toprule
 &  & CounteRGAN & RGD & GradCAM & GradSHAP  & LIME & FIT & CAP & CounTS (Ours) \\ \midrule
Pred. MSE $\downarrow$ &  & \multicolumn{7}{c}{-------------------------- \quad{\underline{0.128}}\quad-------------------------- } & \textbf{0.117} \\
\multirow{2}{*}{CCR $\uparrow$} & 2 Active & 2.206 & {\ul2.217} & 2.043 & 1.989  & 1.872 & 1.863 & 1.614 & \textbf{2.322} \\
 & 1 Active & {\ul 0.705} & 0.683 & 0.654 & 0.615  & 0.473 & 0.55 & 0.497 & \textbf{0.730} \\
Counterf. MSE $\downarrow$ &  & \textbf{0.074} & \textbf{0.074} & - & -  & - & 0.394 & - & {\ul 0.103} \\ \bottomrule
\end{tabular}%
}
\end{table}

\begin{figure*}[th]
    \centering
        \includegraphics[width=1\textwidth]{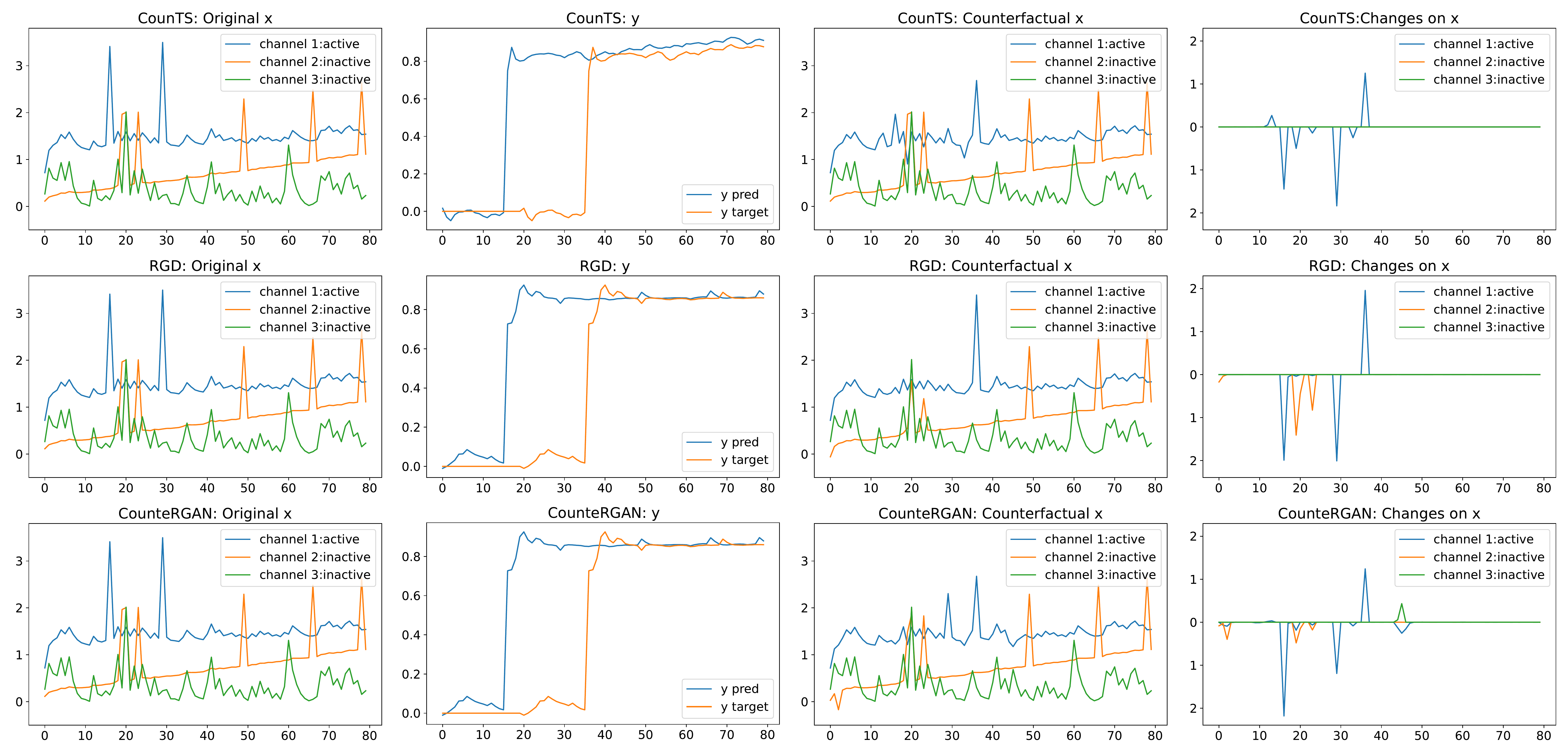}
        \vspace{-25pt}
    \caption{\label{fig:syntheticcase}Qualitative results on the \emph{Spike} dataset. \textbf{Column 1} shows the original input $\x$ for CounTS, RGD, and CounteRGAN, respectively. \textbf{Column 2} shows the predictions $\y^{pred}$ and counterfactual (target) labels $\y^{cf}$. The prediction for RGD and CounteRGAN are identical since they use the same prediction model. \textbf{Column 3} shows the counterfactual input $\x^{cf}$. \textbf{Column 4} shows the changes on input, i.e., $\x^{cf} - \x$. FIT \emph{cannot} provide actionable explanations (see the importance score generated by FIT in \appref{sec:app_add}). }
    \vspace{-15pt}
\end{figure*}

\subsection{Spike Dataset}\label{sec:synthetic}
\textbf{Dataset Description.} 
Inspired by~\cite{FIT}, we construct the \emph{Spike} synthetic dataset. In the dataset, each time series contains $3$ channel, i.e., $\x\in\RB^{3\times T}$. Each channel is a independent non–linear auto-regressive moving average (NARMA) sequence with randomly distributed spikes. The label sequence $\y\in\RB^{T}$ starts at $0$; it \emph{may} switch to $1$ as soon as there is a spike observed in any of the $3$ channels and remain $1$ until the last timestamp. A $3$-dimensional exogenous variable $\u\in\{0,1\}^{3}$ %$=[0.8,0.4,0]$ 
% determines both the height of the spike in each of the $3$ channels and 
% describes the probability 
determines whether the spike in the channel can affect the final output; if so, we say the channel is \emph{active}. Each of the three entries in $\u$, i.e., is sampled independently from three different Bernoulli distributions with parameters $0.8$, $0.4$, and $0$, respectively %; the third channel is always inactive 
(see more details on the dataset in~\appref{sec:app_dataset}). %The detailed data generation process of toy and synthetic data can be found in Appendix [X.X].

% FIT~\cite{FIT} has introduced a spike simulation dataset in the work of FIT. a time-series data with 3 channels are generated and every channel is a independent non–linear auto-regressive moving average (NARMA) sequence and there are randomly distributed spikes in each channel. The label starts from $0$ and will switch to $1$ as soon as there is a spike observed in any of the $3$ channels and will remain $1$ till the last timestamp.

% Inspired by this spike dataset in \cite{FIT}, we proposed a similar spike dataset with the following setting: we generate $3$ independent non-linear auto-regressive moving average (NARMA) sequence as 3 channels in the time-series data and spikes are randomly added to each channel. 
% A 3-dimensional exogenous variable $\u$ determines both the length of the spike in the corresponding channel and whether the spike in this channel can affect the final output (if so, we call the channel is active). The 1-dimensional sequence $\y$ starts with value $0$ and will flip to 1 when it meets the first spike among all active channel and will remains $1$ till the last timestamp. The detailed data generation process of toy and synthetic data can be found in Appendix [X.X].

\textbf{Evaluation Metrics.} 
We use three evaluation metrics:
\begin{compactitem}
\item \textbf{Prediction MSE.} 
We use the mean square error (MSE) $\frac{1}{N}\sum_i^{N}\|\y_i^{pred} - \y_i\|_2^2$ to measure the prediction error in the test set with $N$ time series. 
\item \textbf{Counterfactual MSE.} 
Similar to~\secref{sec:toy}, for a prediction model $f$, the generated counterfactual explanation $\x^{cf}$, and the target label $\y^{cf}$, counterfactual MSE is defined as $\frac{1}{N}\sum_i^{N} \|f(\x_i^{cf})) - \y_i^{cf}\|_2^2$; it measures how successfully $\x^{cf}$ changes the model's prediction to $y^{cf}$. 
\item \textbf{CCR.} 
For a given input $\x_i$, we set the target counterfactual label $\y_i^{cf}$ by shifting $\y^{pred}$ by $20$ timestamps to the right. 
% We use $\x$ to denote the original input and $\x^{cf}$ to denote the counterfactual explanation on input $\x$. 
If at timestamp $t$, there is a spike in an active channel in $\x_i$ triggering the output $\y_i^{pred}$ to switch from $0$ to $1$, an ideal counterfactual explanation $\x_i^{cf}$ should (1) suppress all spikes between $[t, t+20)$, (2) create a new spike at $t+20$ timestamp in all active channels of the original input $\x_i$, and (3) keep all inactive channels unchanged. Therefore the counterfactual change ratio can be defined as (with $N$ time series):
% \begin{align}
$
CCR = \frac{1}{N} \sum_{i=1}^N  \frac{\|\m_i\odot (\x_i - \x_i^{cf})\|_1}{\|(\1-\m_i)\odot (\x_i - \x_i^{cf})\|_1}, %\label{eq:ccr}
$
% \end{align}
where $\m_i=[\u,\u,\dots,\u]\in\RB^{3\times T}$ repeats $\u$ in each time step to mask inactive channels. 
\end{compactitem}

% \textbf{Prediction Metric.}
% The prediction error is defined using the mean square error (MSE) $\frac{1}{n}\sum_i^{n}[y_i^{pred} - \y_i]$ in test set.

% \textbf{Counterfactual Metric.}
% Following the notations in toy dataset's counterfactual accuracy, here we evaluate the realistic of our counterfactual explanation with counterfactual MSE: $\frac{1}{n}\sum_i^{n}\left (M(D(\x_i)) - y_i^{cf}\right )^2$.

% \textbf{Counterfactual Change Ratio.}
% For a given input $\x_i$, we modify $\y_i^{pred}$ to be our target output by shifting the values $20$ timestamps to the right and imputing $0$ to the left, namely $\y_i^{cf}$. We use $\x$ to denote the original input and $\x^{cf}$ to denote the counterfactual explanation on input $\x$. 
% Intuitively, if there is one spike at timestamp $t$ in an active channel in $\x_i$ triggered the output $\y_i^{pred}$ to switch from $0$ to $1$, an ideal counterfactual explainer should make all spikes between $[t, t+20)$ suppressed and a new spike emerging at $t+20$ timestamp in all active channels in $\x_i^{cf}$ and no changes in any inactive channels. Therefore the counterfactual change ratio can be defined as:
% \begin{align*}
% CCR(\x, \x^{cf}) = \EB \left [ \frac{\sum_{\m_i = 1}\|\x_i - \x_i^{cf}\|_1}{\sum_{\m_i = 0}\|\x_i - \x_i^{cf}\|_1 } \right ]
% \end{align*}
Note that the scale of CCR will depend on the number of active channels, we therefore report our results for time series with $1$ and $2$ active channels. Higher CCR indicates better performance. 
% The expectation is calculated over test set. 
% The higher counterfactual change ratio is better during evaluation.

% \begin{table}[]
% \caption{Results on the \emph{Spike} dataset.}
% \label{tab:spike}
% \resizebox{0.48\textwidth}{!}{%
% % \hspace{30pt}
% \begin{tabular}{ccccc}
% \toprule
% \multirow{2}{*}{} & \multirow{2}{*}{Pred. MSE $\downarrow$} & \multicolumn{2}{c}{CCR $\uparrow$} & \multirow{2}{*}{Counterf. MSE $\downarrow$} \\
%  &  & \multicolumn{1}{l}{2 Active} & 1 Active &  \\ \midrule
% RGD & {\ul 0.128} & {\ul2.217} & 0.683 & \textbf{0.074} \\
% GradCAM & - & 2.043 & 0.654 & - \\
% GradSHAP & - & 1.989 & 0.615 & - \\ %\midrule
% CounteRGAN & - & 2.206 & {\ul 0.705} & \textbf{0.074} \\
% LIME & - & 1.872 & 0.473 & - \\ %\midrule
% FIT & - & 1.863 & 0.550 & 0.394 \\ %\midrule
% CAP & - & 1.614 & 0.497 & - \\ 
% CounTS (Ours) & \textbf{0.117} & \textbf{2.452} & \textbf{0.730} & {\ul 0.103} \\ \bottomrule
% \end{tabular}%
% }
% \vspace{-15pt}
% \end{table}

\textbf{Quantitative Results.} 
\tabref{tab:spike} shows the results for different methods in the \emph{Spike}. Similar to the toy dataset, our CounTS outperforms all baselines in terms of CCR, and actionable methods (i.e., CounTS, RGD, and CounteRGAN) outperforms importance score methods (i.e., FIT, LIME, and GradSHAP) thanks to the former's capability of modifying the input to shift the prediction towards the counterfactual target label. 

Interestingly, besides promising performance in terms of CCR, our CounTS can also improve prediction performance, achieving lower prediction MSE. This is potentially due to CounTS's ability to model the exogenous variable $\u$ that decides whether a spike in a specific channel affects the label $\y$. Similar to the toy datset, we notice both CounteRGAN and RGD methods have lower counterfactual MSE (both at $0.074$) than CounTS because they do not need to infer and fix the exogenous variable $\u$ (i.e., the mask) and therefore have more flexibility to modify the input $\x$. 
Since both CounteRGAN and RGD method are unaware of the mask, they suffer from lower CCR and produce more unwanted modification in inactive channels (more details below). 

% we can expect more unwanted modification in the inactive channel in $\x$ and we will illustrate this in qualitative analysis. 

% On the \emph{Spike} dataset, CounTS shows lower prediction error compared to RGD method. Meanwhile, our model have the best counterfactual change ratio among all baselines. Again, actionable explanation methods outperform the importance score methods on counterfactual change ratio due to their capability of modifying the input towards the target prediction. 

% On synthetic dataset, CounTS shows lower prediction error compared to RGD method. Meanwhile, our model have the best counterfactual change ratio among all baselines. Again, actionable explanation methods outperform the importance score methods on counterfactual change ratio due to their capability of modifying the input towards the target prediction. 

% Similar to the toy datset, we notice both CounteRGAN and RGD methods have less counterfactual error than CounTS because they are not infering and fixing the exogenous variable $\m$ and thus have more degree of freedom to modify the input $\x$.  Since both CounteRGAN and RGD method are not aware of the mask, we can expect more unwanted modification in the inactive channel in $\x$ and we will illustrate this in qualitative analysis.

\textbf{Qualitative Results.} 
\figref{fig:syntheticcase} shows an example time series as a case study. In this example, only channel $1$ (blue) is active and thus the spike in channel $1$ at timestamp $t=16$ will flip the output $\y$ from $0$ to $1$. Both CounTS's and RGD's predictions $\y^{pred}$ (blue in Column 1) are very close to the ground truth. 

Following the evaluation protocol above, we set our counterfactual (target) label $\y^{cf}$ by shifting $\y^{pred}$ by $20$ timestamps to the right and padding $0$s on the left (yellow). 
% A typical experiment result is shown in \figref{fig:syntheticcase}. 
% In this example, only the channel $1$ (blue) is active and thus the spike in channel $1$ at timestamp $t=16$ will flip the output $\y$ from $0$ to $1$. Both CounTS and RGD predicts a $\y^{pred}$ (colored blue) very close to the ground truth. Then we set our target $\y^{cf}$ by shifting $\y^{pred}$ by $20$ timestamps to the right and padding $0$s on the left (yellow). 
% CounTS, RGD and CounteRGAN provides an explanation by changing the original $\x$ (second column) to $\x^{cf}$ (third column). The amount of changes on $\x^{cf}-\x$ is plotted in the last column. As FIT method explains by assigning importance score to each timestamp in $\x$ and it is shown in bottom-left. 
% To achieve such target, w
We should expect an ideal counterfactual explanation to have no spike in $t=[16, 36)$ and a new spike at $t=36$ in channel 1. Since channel 2 and 3 are inactive, there should not be any changes. We can see all actionable methods (i.e., CounTS, CounteRGAN, and RGD) try to reduce the spikes in $t=[16,36)$. However, CounteRGAN fails to remove the spike at $t=29$; both RGD and CounteRGAN makes undesirable changes in inactive channels 2 and 3. Compared to baselines, our CounTS shows a significant advantage at inferring the active channel and suppressing the changes in inactive channels. This demonstrates that CounTS is more actionable and feasible than the existing time-series explanation methods. 
Note that FIT is \emph{not} an actionable explanation method and therefore can only provide importance scores to explain the current prediction $\y^{pred}$ (see \appref{sec:app_add} \figref{fig:fitcase} for details).

\begin{table}[t]
\vspace{-3pt}
\setlength{\tabcolsep}{3.0pt}
\caption{{\textbf{Average CCR for both intra-dataset and cross-dataset settings.} The average CCR is calculated over 6 counterfactual action settings (A$\rightarrow$D, L$\rightarrow$D, R$\rightarrow$D, R$\rightarrow$A, D$\rightarrow$A, and L$\rightarrow$A) for every model and dataset setting. The \emph{last row} are the average CCR over all 6 dataset settings for every method. }}
\label{tab:realccr}
\resizebox{0.48\textwidth}{!}{%
\begin{tabular}{@{}ccccccccc@{}}
\toprule
 & CounteRGAN & RGD & GradCAM & GradSHAP & LIME & FIT & CAP & CounTS (Ours) \\ \midrule
$\rightarrow$SOF & {\ul 1.153} & 1.135 & 1.126 & 0.908 & 0.932 & 1.072 & 1.034 & \textbf{1.313} \\
$\rightarrow$SHHS & 1.231 & {\ul 1.241} & 1.056 & 0.975 & 1.020 & 1.193 & 0.929 & \textbf{1.390} \\
$\rightarrow$MESA & {\ul 1.285} & 1.188 & 1.031 & 1.025 & 1.007 & 1.142 & 1.031 & \textbf{1.377} \\ \midrule \midrule
SOF & {\ul 1.126} & 1.088 & 0.887 & 0.996 & 1.078 & 1.053 & 0.963 & \textbf{1.281} \\
SHHS & {\ul 1.133} & 1.111 & 0.890 & 1.068 & 1.059 & 1.067 & 1.000 & \textbf{1.333} \\
MESA & 1.163 & {\ul 1.166} & 0.883 & 1.042 & 0.987 & 1.089 & 0.962 & \textbf{1.329} \\ \midrule
Average & {\ul 1.182} & 1.155 & 0.979 & 1.002 & 1.014 & 1.103 & 0.987 & \textbf{1.337} \\ \bottomrule
\end{tabular}%
}
\end{table}

\subsection{Real-World Datasets}\label{sec:real}
\textbf{Dataset Description.} 
We evaluate our model on three real-world medical datasets: Sleep Heart Health Study (\emph{SHHS})~\cite{SHHS}, Multi-ethnic Study of Atherosclerosis (\emph{MESA})~\cite{MESA}, and Study of Osteoporotic Fractures (\emph{SOF})~\cite{SOF}, each containing subjects' full-night breathing breathing signals. The breathing signals are divided into $30$-second segments; each segment has one of the sleep stages (`Awake', `Light Sleep', `Deep Sleep', `Rapid Eye Movement (REM)') as its label. In total, there are $2{,}651$, $2{,}055$ and $453$ patients in SHHS, MESA, and SOF, respectively, with an average of $1{,}043$ breathing signal segments (approximately $8.7$ hours). 
% \tabref{tab:stage_description} summarizes patterns  Different sleep stages' pattern can be described with .

% Although the research direction are not the same, they all recorded the full-night breathing air-flow signal for every participant. The air-flow signals are divided into 30-second segments and one of the sleep stage labels (`Awake', `Light Sleep', `Deep Sleep', `Rapid Eye Movement (REM)') is assigned to each segment by area experts. According to MESA procedure manual, different sleep stages' pattern can be described with \tabref{tab:stage_description}.

% The real-world datasets also include additional patient information such as age, gender and race. The age range is $[44, 90]$ for SHHS, $[54, 95]$ for MESA and $[71, 90]$ for SOF. 

\begin{table}[t]
% \vskip -0.3cm
\vspace{-3pt}
\caption{\textbf{CCR for each counterfactual action setting in \emph{SHHS}.}}
\vskip 0.1pt
\label{tab:shhs}
\resizebox{0.48\textwidth}{!}{%
\begin{tabular}{cccccccc}
\toprule
 & A$\rightarrow$D & L$\rightarrow$D & R$\rightarrow$D & R$\rightarrow$A & D$\rightarrow$A & L$\rightarrow$A & Average \\ \midrule
CounteRGAN & 1.119 & 1.144 & 1.055 & 1.298 & 1.125 & 1.059 & {\ul 1.133} \\
RGD & 1.006 & 1.130 & 0.985 & 1.180 & {\ul 1.155} & 1.201 & 1.111 \\
GradCAM & 0.882 & 0.790 & 0.859 & 0.954 & 0.882 & 0.977 & 0.890 \\
GradSHAP & 1.100 & 1.105 & 0.903 & 1.171 & 1.100 & 1.027 & 1.068 \\
LIME & 0.855 & \textbf{1.178} & 1.067 & 1.125 & 0.855 & {\ul 1.273} & 1.059 \\
FIT & 0.948 & 1.012 & 1.215 & {\ul 1.405} & 0.948 & 0.877 & 1.067 \\
CAP & 0.885 & 0.829 & {\ul 1.236} & 0.967 & 0.885 & 1.199 & 1.000 \\ 
CounTS (Ours) & \textbf{1.244} & {\ul 1.159} & \textbf{1.349} & \textbf{1.515} & \textbf{1.324} & \textbf{1.407} & \textbf{1.333}\\ \bottomrule
\end{tabular}%
}
\end{table}

Corresponding to our causal graph in \figref{fig:causal_graph}(left), $\x$ is the original breathing signal, $\z$ is signal patterns (representation) learned from the encoder, $\u$ can be `gender', `age', and even `dataset index' (`dataset index' will be used during cross-dataset prediction; more details later), and $\y$ is the sleep stage label. 
Intuitively, `gender', `age', and `dataset index' (due to different experiment instruments and signal measurement) can have causal effect on both the breathing signal patterns (subjects of different ages can have different breathing frequencies and magnitudes) and sleep stages (elderly subjects have less `Deep Sleep' at night). 
% remove to save space if needed
    
% \begin{table}[]
% \caption{}
% \vspace{2pt}
% \label{tab:stage_description}
% \resizebox{0.48\textwidth}{!}{%
% \begin{tabular}{ccc}
% \toprule
% Sleep Stages & Pattern & Abbreviation \\ \midrule
% Awake (Stage 0) & Irregular signal pattern. & A \\ \midrule
% REM & \begin{tabular}[c]{@{}c@{}}Irergular waves, resembling wakefulness. \\ Rapid eye movements and reduced \\ muscle activity will be observed.\end{tabular} & R \\ \midrule
% Light sleep (Stage 1 \& 2) & \begin{tabular}[c]{@{}c@{}}Slowing of activity compared to awake, \\ scattered large waves and fast waves.\end{tabular} & L \\ \midrule
% Deep Sleep (Stage 3) & Waves are slower and higher in amplitude & D \\ \bottomrule
% \end{tabular}%
% }
% \vspace{-15pt}
% \end{table}

\textbf{Evaluation Metrics.} 
For brevity, we use abbreviations `A', `R', `L', and `D' to represent the four sleep stages ‘Awake’, ‘Rapid Eye Movement (REM)’, ‘Light
Sleep’, ‘Deep Sleep’, respectively. 
We use three evaluation metrics:
\begin{compactitem}
\item \textbf{Prediction Metric} and \textbf{Counterfactual Metric} are similar to those in the toy classification dataset. %, only the categorical label $y$ is replaced by a time series vector $\y$.
\item \textbf{CCR.} Since ground-truth confounders are not available in real-world datasets, we propose to compute CCR on two consecutive 30-second segments with different sleep stages. For example, 
given the two segments $[\x_{i,1},\x_{i,2}]$ with the corresponding two predicted sleep stages $[\y_{i,1}^{pred},\y_{i,2}^{pred}] = [A, D]$, we set the counterfactual labels $[\y_{i,1}^{cf},\y_{i,2}^{cf}] = [A, A]$; we refer to this setting as D$\rightarrow$A. An ideal counterfactual explanation $[\x_{i,1}^{cf},\x_{i,2}^{cf}]$ should shift the model prediction from $[A,D]$ to $[A,A]$ while keeping the first segment unchanged (i.e., smaller $\|\x_{i,1}-\x_{i,1}^{cf}\|_1$). CCR can therefore be computed as (with $N$ segment pairs in the test set): 
% \begin{align*}
$
CCR = \frac{1}{N} \sum_{i=1}^N  \frac{\|\x_{i,2} - \x_{i,2}^{cf}\|_1}{\|\x_{i,1} - \x_{i,1}^{cf}\|_1}. %\label{eq:ccr_real}
$
% \end{align*}
\end{compactitem}
\vspace{-5pt}
In the experiments, we focus on cases where the target label $\y_{i,2}^{cf}$ is either `Awake' or `Deep Sleep' (two extremes of sleep stages), and evaluated $6$ \emph{counterfactual action settings}, namely A$\rightarrow$D, L$\rightarrow$D, R$\rightarrow$D, R$\rightarrow$A, D$\rightarrow$A, and L$\rightarrow$A. 

% \textbf{Prediction Metric} and \textbf{Counterfactual Metric} are similar with toy dataset, only the categorical label $y$ is replaced by a time series vector $\y$.

% \textbf{Counterfactual Change Ratio.}
% Due to the nature that real-world dataset is lacking in ground-truth confounders, we proposed a counterfactual change ratio based on two consecutive 30-second segments with different sleep stages. Given a 60-second segment $\x_i$ where the $2$ sleep stages are predicted as $\y_i^{pred} = [A, D]$ , we will set $\y_i^{cf} = [A, A]$ and update the input $\x_i$ by forcing the $\y_i^{pred}$ to approximate $\y_i^{cf}$. We only expect modifications in the second 30-second segment where the prediction changed from $D$ to $A$ thus we can adapt the counterfactual change ratio in \eqnref{eq:toymetric} if we define $\m$ as 60-second time sequence where first 30-second are $1$s and last 30-second are $0$s. For simplicity, this setting will be denoted as ``D$\rightarrow$A".

\textbf{Cross-Dataset and Intra-Dataset Settings.} 
Since the three datasets are collected with different devices and procedures, we consider the dataset index as an additional exogenous variable in $\u$, which has causal effect on both the learned representation $\z$ and the sleep stage $\y$. This leads to two different settings: the intra-dataset setting (without the dataset index confounder) and the cross-dataset setting (with the dataset index confounder).
\begin{compactitem}
    \item \textbf{Intra-Dataset Setting.}  Training and test sets are from same dataset, and we do not involve the dataset index as part of the exogenous variable (confounder) $\u$. % in every experiment and we will not infer the dataset index in our model. There will be 3 set of intra-dataset experiments in total.
    \item \textbf{Cross-Dataset Setting.} We choose two of the datasets as the source datasets, with the remaining one as the target dataset (e.g., \emph{MESA}+\emph{SHHS}$\rightarrow$\emph{SOF}). We use all source datasets (e.g., \emph{SHHS} and \emph{MESA}) and $10\%$ of the target dataset (e.g., \emph{SOF}) as the training set and use the remaining $90\%$ of the target dataset as the test set. We treat the dataset index as part of $\u$, which is predicted during both training and inference. 
    % One of the three dataset will be chosen and we take $90\%$ of data in this dataset as test set, $10\%$ and the rest $2$ datasets will be used as training set. For simplicity, if we take $90\%$ of SOF as test set, we will use ``MESA, SHHS$\rightarrow$ SOF" to denote the setting of this experiment. The dataset index will be included in $\u$ and inferred during training and inference.     
\end{compactitem}

\textbf{Quantitative Results.} 
% 这段还在改，因为rgd的cf accu刚刚更新
% table的best score还没有更新\
% Table~\ref{tab:sof}, ~\ref{tab:shhs} and ~\ref{tab:mesa} show the CCR of different methods in the intra-dataset setting; Table~\ref{tab:tosof}, ~\ref{tab:toshhs} and ~\ref{tab:tomesa} show the CCR of different methods in the cross-dataset setting. Results show that our CounTS outperforms the baselines in most tasks and performs best in terms of average CCR (over six sleep stage pairs) for all datasets in both settings. This demonstrates CounTS's capability of generating feasible and actionable explanations in complex real-world datasets. 
{Table~\ref{tab:realccr} shows the average CCR of different methods for both intra-dataset and cross-dataset settings. Results show that our CounTS outperforms the baselines in all the dataset settings in terms of the average CCR. This demonstrates CounTS's capability of generating feasible and actionable explanations in complex real-world datasets. {\tabref{tab:shhs} shows the detailed CCR for each counterfactual action setting in \emph{SHHS}, where CounTS is leading in almost all counterfactual action settings. Detailed results for other datasets can be found in \tabref{tab:sof}$\sim$\ref{tab:tosof} of \appref{sec:app_add}.}

\begin{table}[t]
\vspace{-5pt}
\caption{\textbf{Prediction accuracy for both intra-dataset and cross-dataset settings.} All baselines (e.g., RGD, CounteRGAN, and FIT) explain the same prediction model and therefore share the same prediction accuracy. `$\rightarrow$\emph{SHHS}' means `\emph{MESA}+\emph{SOF}$\rightarrow$\emph{SHHS}' and similarly for `$\rightarrow$\emph{MESA}' and `$\rightarrow$\emph{SOF}'. }
\label{tab:realaccu}
\vskip 0.1cm
\setlength{\tabcolsep}{2pt}
\resizebox{0.48\textwidth}{!}{%
\begin{tabular}{ccccccc}
\toprule
Method & $\rightarrow$\emph{SOF} & $\rightarrow$\emph{SHHS} & $\rightarrow$\emph{MESA} & \emph{SOF} & \emph{SHHS} & \emph{MESA} \\ \midrule
 All Baselines & 0.702 & 0.747 & \textbf{0.768} & 0.729 & \textbf{0.801} & \textbf{0.813} \\
CounTS (Ours)  & \textbf{0.719} & \textbf{0.753} & 0.741 & \textbf{0.732} & 0.789 & 0.799\\ \bottomrule
\end{tabular}%
}
\end{table}

\begin{table}[t]
\vspace{-5pt}

\caption{\textbf{Average counterfactual accuracy for both intra-dataset and cross-dataset settings.} The average is calculated over all 6 counterfactual action settings.}
\label{tab:real_cf_accu}
\vskip 0.1cm
\setlength{\tabcolsep}{1pt}
\resizebox{0.48\textwidth}{!}{%
\begin{tabular}{@{}cccccccc@{}}
\toprule
 & $\rightarrow$SOF & $\rightarrow$SHHS & $\rightarrow$MESA & SOF & SHHS & MESA & Average \\ \midrule
RGD & {\ul 0.913} & \textbf{0.887} & \textbf{0.907} & \textbf{0.862} & \textbf{0.907} & \textbf{0.892} & \textbf{0.895} \\
CounteRGAN & 0.897 & {\ul 0.868} & 0.880 & 0.844 & 0.888 & 0.868 & 0.874 \\
FIT & 0.331 & 0.327 & 0.346 & 0.248 & 0.322 & 0.294 & 0.311 \\
CounTS (Ours) & \textbf{0.916} & \textbf{0.887} & {\ul 0.889} & {\ul 0.852} & {\ul 0.903} & {\ul 0.887} & {\ul 0.889} \\ \bottomrule
\end{tabular}%
}
\end{table}

\tabref{tab:realaccu} shows the prediction accuracy for both intra-dataset and cross-dataset settings. Similar to~\tabref{tab:toy_result} and~\tabref{tab:spike}, all baselines (e.g., RGD, CounteRGAN, and FIT) explain the same prediction model and therefore share the same prediction accuracy. Results show that CounTS achieves prediction accuracy comparable to the original prediction model. ~\tabref{tab:real_cf_accu} shows the average counterfactual accuracy for different dataset settings over $6$ counterfactual action settings. As in the toy and \emph{Spike} datasets, RGD achieves higher counterfactual accuracy because it does not need to infer and fix the exogenous variable $\u$ and hence enjoys more flexibility when modifying the input $\x$; this often leads to undesirable changes (e.g., breathing frequency and amplitude which are related to the subject's age), less feasible counterfactual explanations, and therefore worse CCR performance. In contrast, breathing frequency and amplitude are potentially captured by $\u_l$ and $\u_g$ in CounTS and kept unchanged (see qualitative results below). {Detailed counterfactual accuracy for every counterfactual action setting and every dataset setting can be found in \appref{sec:app_add}.}  % in terms of CCR. 

\begin{figure}[t]
\vskip -0.3cm
\centering
\hspace*{-7pt}
        \includegraphics[width=0.49\textwidth]{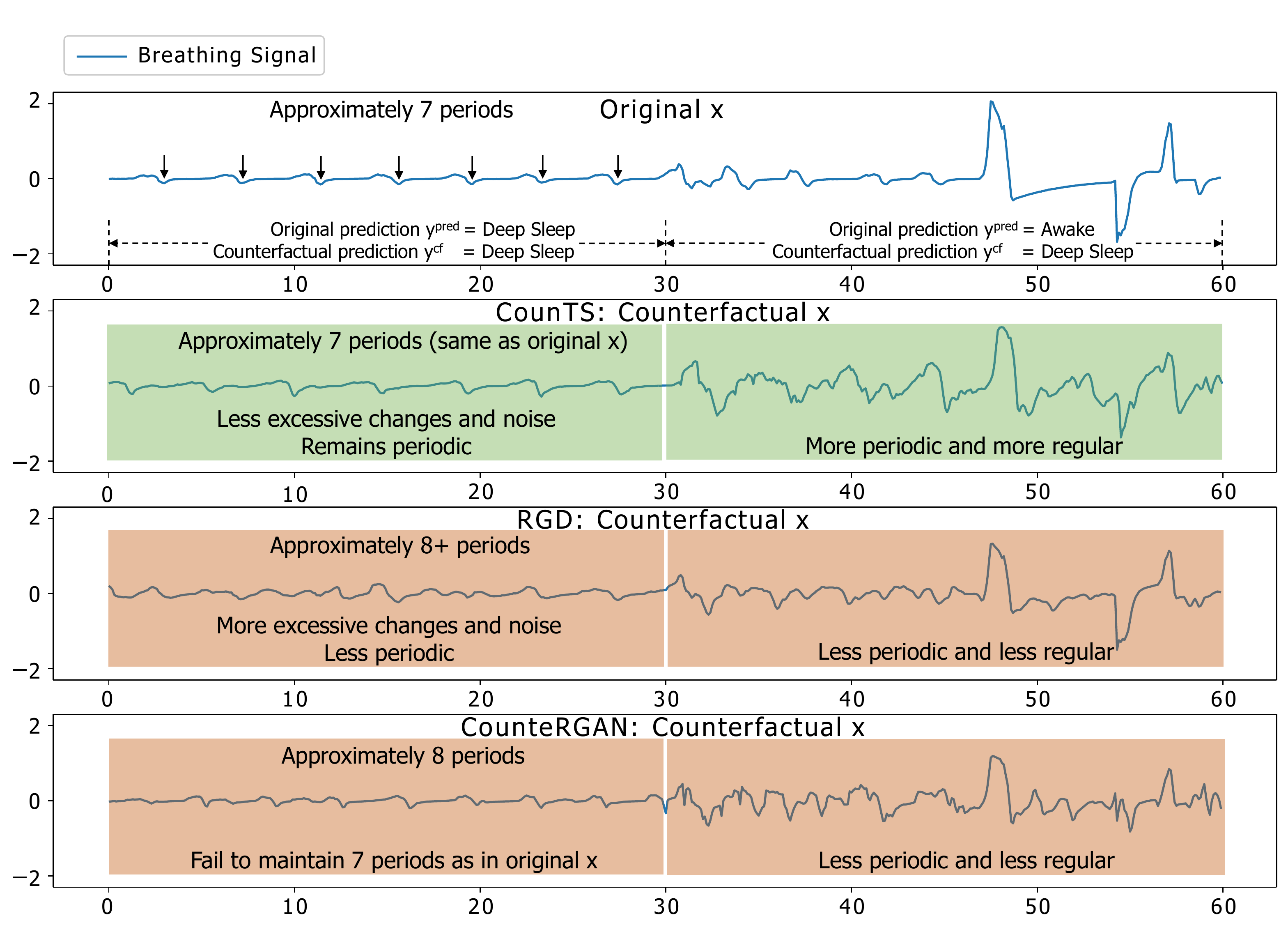}
        \vspace{-25pt}
    \caption{\label{fig:realcase}Example on real-world dataset from \emph{MESA} intra-dataset experiment. For this example, $\y^{pred}$ is [D,A] and target $\y^{cf}$ is [D,D] since we assume `Deep Sleep' and `Active' have the most different patterns. FIT \emph{cannot} provide actionable explanations (see the importance score generated by FIT in \appref{sec:app_add}).}
    \vskip 0.2cm
\end{figure}

\textbf{Qualitative Analysis: Problem Setting.}
As some background on sleep staging, note that a patient's breathing signal will be much more periodic in `Deep Sleep' than in `Awake'. \figref{fig:realcase} shows an example time series from the MESA dataset as a case study. It contains two consecutive segments of breathing signals ($60$ seconds in total). The first segment ($0\sim 30$ seconds) is regular and periodic, and therefore both CounTS and baselines correctly predict its label as `Deep Sleep' (`D'); the second segment ($30\sim 60$ seconds) is irregular and therefore both CounTS and baselines correctly predict its label as `Awake' (`A'). The goal is to generate a counterfactual explanation such that the model predicts the \textit{second} segment as `Deep Sleep'.

\textbf{Qualitative Analysis: Ideal Counterfactual Explanation.}
Since a `Deep Sleep' segment should be more periodic, an \textit{ideal} counterfactual explanation should 
keep the first segment ($0\sim 30$ seconds) \textit{unchanged} and maintain its \textit{periodicity}, 
make the second segment ($30\sim 60$ seconds) \textit{more periodic} (the pattern for `Deep Sleep'), and 
keep the breathing frequency (number of breathing cycles) unchanged throughout both segments (this is related to "feasibility" since a patient usually has the same breathing frequency during `Deep Sleep'). 

\textbf{Qualitative Analysis: Detailed Results.} 
\figref{fig:realcase} compares the generated counterfactual explanations from CounTS, RGD, and CounteRGAN, and we can see that our CounTS's explanation is closer to the ideal case. Specifically:

% \textbf{(1) The First 30-Second Segment:}  
% \begin{enumerate}
% \begin{compactenum}
\begin{compactenum}
    \item \textbf{The First 30-Second Segment:} In the original breathing signal, the first 30-second segment ($0\sim 30$ seconds) is periodic and contains 7 breathing cycles (i.e., 7 periods). Our CounTS managed to keep the first 30-second segment signal mostly unchanged, maintaining its periodicity and 7 breathing cycles. In contrast, RGD changes the first segment to the point that its periodicity is mostly lost. CounteRGAN can maintain the first segment's periodicity; however it increases the number breathing cycles from 7 to 8, making it \textit{infeasible}.
    \item \textbf{The Second 30-Second Segment:} In the original breathing signal, the second 30-second segment ($30\sim 60$ seconds) is irregular since the patient is in the `Awake' sleep stage. CounTS managed to make the `Awake' (i.e., `Active') signal much more periodic, successfully steering the prediction to `Deep Sleep'. In contrast, RGD and CounteRGAN failed to make the second 30-second segment ($30\sim 60$ seconds) more periodic. 
    \item \textbf{Capturing and Maintaining Global Temporal Patterns:} Meanwhile, we can observe that CounTS's explanation in the second 30-second segment ($30\sim 60$ seconds) has a much more similar breathing frequency (measured by the number of breathing cycles per minute) with the first 30-second segment ($0\sim 30$ seconds), compared to RGD and CounteRGAN. This observation shows that CounTS is capable of capturing global temporal pattern (the breathing frequency of the individual) and keeping such exogenous variables unchanged in its explanation thanks to the global exogenous variable $\mathbf{u}_g$ in our model.
\end{compactenum}

\section{Conclusion}
In this paper, we identified the actionability and feasibility requirements for time series models counterfactual explanations and proposed the first self-interpretable time series prediction model, CounTS, that satisfies both requirements. Our theoretical analysis shows that CounTS is guaranteed to identify the causal effect between time series input and output in the presence of confounding variables, 
thereby generating counterfactual explanations in the causal sense. 
Empirical results showed that our method achieves competitive prediction accuracy on time series data and is capable of generating more actionable and feasible counterfactual explanations. Interesting future work includes further reducing the computation complexity during counterfactual inference, handling uncertainty in the explanation~\cite{TFUncertainty}, and extending our method to multimodal data.

\section*{Acknowledgement}
We would like to thank the reviewers/AC for the constructive comments to improve the paper. JY and
HW are partially supported by NSF Grant IIS-2127918. We also thank National Sleep Research Resource (NSRR) data repository and the researchers for sharing SHHS~\citep{SHHS1,SHHS2}, MESA~\citep{SHHS1, MESA1} and SOF~\citep{SHHS1, SOF1}. 

\nocite{langley00}

\bibliography{paper}
\bibliographystyle{icml2023}

%%%%%%%%%%%%%%%%%%%%%%%%%%%%%%%%%%%%%%%%%%%%%%%%%%%%%%%%%%%%%%%%%%%%%%%%%%%%%%%
%%%%%%%%%%%%%%%%%%%%%%%%%%%%%%%%%%%%%%%%%%%%%%%%%%%%%%%%%%%%%%%%%%%%%%%%%%%%%%%
% APPENDIX
%%%%%%%%%%%%%%%%%%%%%%%%%%%%%%%%%%%%%%%%%%%%%%%%%%%%%%%%%%%%%%%%%%%%%%%%%%%%%%%
%%%%%%%%%%%%%%%%%%%%%%%%%%%%%%%%%%%%%%%%%%%%%%%%%%%%%%%%%%%%%%%%%%%%%%%%%%%%%%%
\newpage
\appendix
\onecolumn

\begin{center}
\LARGE{\textbf{Supplementary Material}}
\end{center}

\section{Evidence Lower Bound}\label{sec:app_elbo}
We can derive the evidence lower bound (ELBO) in~\eqnref{eq:elbo_simple} as follows: 
% \hao{use bold face for vector $\x$, $\z$, $\u$, do not need to bold face $y$ since it is not a vector}
\begin{align*}
&\log p(\y|\x) \\
&=\log \int_{\z}\int_{\u} p(\y, \z,\u | \x) d \u d \z \\
&\geq \mathbb{E}_{q_\phi (\u, \z, \y| \x)} \left [ \log\frac{p_\theta(\y, \u, \z | \x)}{q_\phi(\u, \z, \y| \x)}\right ] \\
&=\mathbb{E}_{q_\phi(\u, \z, \y| \x)} \left [\log\frac{p(\u)  p_\theta(\z | \u, \x)p_\theta(\y| \u, \z)}{q_\phi(\u, \z, \y| \x)} \right ] \\
&=\EB_{q_\phi(\y, \u_l,\u_g,\z|\x)}[p_{\theta}(\y, \u_l,\u_g,\z|\x)] - \EB_{q_\phi(\y, \u_l,\u_g,\z|\x)}[q_\phi(\y, \u_l,\u_g,\z|\x)]\\
&=\mathbb{E}_{q_\phi(\u, \z, \y| \x)}[\log p(\u)]
+\mathbb{E}_{q_\phi(\u, \z, \y| \x)}[\log p_\theta(\z | \u, \x)] 
+\mathbb{E}_{q_\phi(\u, \z, \y| \x)}[\log p_\theta(\y | \u, \z)] 
-\mathbb{E}_{q_\phi(\u, \z, \y| \x)}[\log q_\phi(\u, \z, \y| \x)] \label{eq:marginal}
\end{align*}

\section{More Details on Counterfactual Inference}
\subsection{Proof on Identifiability} \label{sec:proof}
\begin{theorem}[\textbf{Identifiability}]
    Given the posterior distribution of exogenous variable $p(\u_l,\u_g|\x,\y)$, the effect of action $p(\y=\y^{cf} | do(\x=\x^{\prime}), \u_l,\u_g)$ can be identified using $\EB_{p(\z | \x^{\prime}, \u_l,\u_g)} p\left(\y^{cf} | \z, \u_l,\u_g\right)$.
\end{theorem}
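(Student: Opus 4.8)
The plan is to work directly with the structural causal model induced by the causal graph in \figref{fig:causal_graph}(left) and the generative factorization in \eqnref{eq:p_factor}, reducing the interventional quantity to observational conditionals via the truncated (``mutilated-graph'') factorization and one back-door / $d$-separation argument. First I would record two structural facts read off the graph: (i) $\x$ is a source in the modeled graph (the generative model is defined conditionally on $\x$, so $\x$ has no modeled parents) and its only child is $\z$; (ii) $\u_l$ and $\u_g$ are non-descendants of $\x$ and keep their prior $p_\theta(\u_l,\u_g)$ under the intervention. Consequently $do(\x=\x')$ deletes no edges other than (vacuously) those into $\x$, so the post-intervention joint over the remaining variables is obtained by substituting $\x'$ into the mechanism for $\z$:
\begin{align*}
p\big(\y,\z,\u_l,\u_g \mid do(\x=\x')\big) = p_\theta(\u_l,\u_g)\, p_\theta(\z\mid \u_l,\u_g,\x')\, p_\theta(\y\mid \u_l,\u_g,\z).
\end{align*}

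Next I would condition this expression on $(\u_l,\u_g)$ — legitimate by fact (ii) — and marginalize out $\z$, which gives
\begin{align*}
p\big(\y=\y^{cf}\mid do(\x=\x'),\u_l,\u_g\big) = \int_{\z} p_\theta(\y^{cf}\mid \u_l,\u_g,\z)\, p_\theta(\z\mid \u_l,\u_g,\x')\, d\z = \EB_{p_\theta(\z\mid \x',\u_l,\u_g)}\big[p_\theta(\y^{cf}\mid \z,\u_l,\u_g)\big].
\end{align*}
The third and substantive step is to argue that the structural mechanisms $p_\theta(\z\mid \u_l,\u_g,\x')$ and $p_\theta(\y\mid \u_l,\u_g,\z)$ coincide with the \emph{observational} conditionals $p(\z\mid \x',\u_l,\u_g)$ and $p(\y\mid \z,\u_l,\u_g)$ respectively. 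For $\z$ this is immediate since $\{\x,\u_l,\u_g\}$ is exactly the parent set of $\z$. For $\y$ one must observe that, although $\y$ is a descendant of $\x$, the unique directed path $\x\to\z\to\y$ is blocked once $\z$ is in the conditioning set, and there is no back-door path from $\x$ to $\y$ (the confounders $\u_l,\u_g$ are already conditioned on); equivalently $\y \indep \x \mid \z,\u_l,\u_g$ by $d$-separation, which is Rule~2 of do-calculus in the graph $G_{\overline{\x}}$. Substituting the observational conditionals back into the previous display yields exactly $\EB_{p(\z\mid \x',\u_l,\u_g)}[p(\y^{cf}\mid \z,\u_l,\u_g)]$, as claimed.

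The main obstacle I anticipate is making this third step fully rigorous rather than hand-wavy: one has to check that conditioning on $(\u_l,\u_g)$ in the intervened model does not reintroduce spurious dependence (it does not, precisely because $\u_l,\u_g$ are exogenous and unaffected by $do(\x)$), and that the separation $\y \indep \x \mid \z,\u_l,\u_g$ genuinely holds \emph{in this particular graph} — in a variant where $\u_l$ or $\u_g$ also pointed into $\x$, or where a direct edge $\x\to\y$ were present, the identity would fail. So the crux is a careful reading of \figref{fig:causal_graph}(left) together with the standard back-door criterion; the rest is bookkeeping with the factorization in \eqnref{eq:p_factor}.
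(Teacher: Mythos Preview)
Your argument is correct. The paper takes the do-calculus route: it expands $p(\y^{cf}\mid do(\x'),\u)$ over $\z$, then applies Rule~2 (to replace $p(\z\mid do(\x'),\u)$ by $p(\z\mid \x',\u)$ and simultaneously to promote the conditioning on $\z$ to $do(\z)$ in the $\y$-factor), Rule~3 (to delete $do(\x')$ once $do(\z)$ is present), and Rule~2 once more (to demote $do(\z)$ back to $\z$). You instead write down the truncated post-intervention joint directly from the generative factorization in \eqnref{eq:p_factor} and then match each structural mechanism to its observational conditional via parent-set conditioning and a single $d$-separation check ($\y\indep\x\mid\z,\u_l,\u_g$). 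The two derivations rest on exactly the same graphical facts and are interchangeable; yours is more self-contained for this simple graph, while the paper's is the textbook do-calculus chain. One minor slip in your closing caveat: an additional edge $\u_l\to\x$ or $\u_g\to\x$ would \emph{not} break the identity --- you are already conditioning on the full parent sets of $\z$ and of $\y$, and such an edge introduces no unblocked path once $\u$ is in the conditioning set (and it is severed under $do(\x)$ anyway); only a direct $\x\to\y$ edge would spoil the argument. Relatedly, the Rule~2 graph is $G_{\underline{\x}}$ (outgoing edges from $\x$ removed), not $G_{\overline{\x}}$, though the $d$-separation you actually need holds in the original $G$ regardless.
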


\begin{proof}
With $\u=(\u_l,\u_g)$ and applying Rule 2 and 3 in do-calculus~\cite{Causality}, we have
% \begin{equation}
\begin{align}
&p\left(\y^{cf} | do\left(\x=\x^{\prime}\right), \u\right) \nonumber\\
&\ \  =\int_{\z} p\left(\y^{cf} | \z, do\left(\x^{\prime}\right), \u\right) p\left(\z | do\left(\x^{\prime}\right), \u\right) d\z \nonumber\\
&\ \  =\EB_{p\left(\z | do\left(\x^{\prime}\right), \u\right)} \left [ p\left(\y^{cf} | \z, do\left(\x^{\prime}\right), \u\right) \right ]\nonumber\\
&\overset{\mathrm{Rule 2}}{=}\EB_{p\left(\z | \x^{\prime}, \u\right)} \left [p\left(\y^{cf} | do(\z), do\left(\x^{\prime}\right), \u\right) \right ]\nonumber\\
&\overset{\mathrm{Rule 3}}{=}\EB_{p\left(\z | \x^{\prime}, \u\right)} \left [p\left(\y^{cf} | do(\z), \u\right) \right ]\nonumber\\
&\overset{\mathrm{Rule 2}}{=}\EB_{p\left(\z | \x^{\prime}, \u\right)} \left [p\left(\y^{cf} | \z, \u\right) \right ],%\label{eq:do}
\end{align}
concluding the proof. 
\end{proof}

\subsection{Counterfactual Explanation Algorithm} \label{sec:app_algo}
The pseudo-code for counterfactual explanation is shown in \algref{alg:cf}.
\begin{algorithm}[h]
   \caption{Generating Counterfactual Explanations}
   \label{alg:cf}
\begin{algorithmic}
   \STATE {\bfseries Input:} data $\x$, threshold $\epsilon$, number of samples $n$ and $m$.
   \STATE Sample $\y^{pred}$ from $q_\phi(\y|\x)$.
   \STATE Set a target counterfactual output $\y^{cf} \neq \y^{pred}$
   \FOR{$i=1$ {\bfseries to} $m$}
   \STATE Sample $\u_i$ from $q_\phi(\u|\x,\y)$
    \FOR{$j=1$ {\bfseries to} $n$}
    \STATE Sample $\z_{ij} \sim p_\theta(\z|\x, \u_i)$
    \STATE Sample $\y_{ij} \sim p_\theta(\y|\z_{ij}, \u_i)$
    \ENDFOR
   \ENDFOR
   \STATE  Calculate $\y^{cf} = \frac{\sum_i \sum_j \y_{ij}}{n\times m}$
   \STATE Set $\x^{cf} = \x$
   \WHILE{$|\y-\y^{cf}| \geq \epsilon$}
    % \STATE {\bfseries Update} $\delta = |\y-\y^\prime|$
    % \STATE {\bfseries Compute} $\frac{\partial \delta}{\partial \x}$ and {\bfseries update} $\x$
    \STATE Update $\x^{cf} = \x^{cf} - \lambda \frac{\partial \mid \y-\y^{cf} \mid}{\partial \x^{cf}}$
   \ENDWHILE 
   \STATE{\bfseries return} $\x^{cf}$
\end{algorithmic}
\end{algorithm}

\section{More Details on Experiments}

\subsection{Details on Datasets}\label{sec:app_dataset}
\textbf{\emph{Spike} Dataset.} The generation process of the \emph{Spike} dataset is summarized below:

We generate $D=3$ independent channels of non–linear auto-regressive moving average (NARMA) time series data using the following formula:
\begin{equation}
x_{d,t+1}=0.5x_{d,t}+0.5x_{d,t}\sum_{i=0}^{l-1}x(t-l)+1.5u(t-(l-1))u(t)+0.5+\alpha_{d}t \label{eq:spike_narma}
\end{equation}
for $t=[1,\dots,80]$, order $l=2$, $u \sim \NM(0, 0.03)$, and $\alpha_d$ is set differently for each channel ($\alpha_1 = 0.1$, $\alpha_2 = 0.065$ and $\alpha_3 = 0.003$). We use $d$ to index the $3$ channels. 

\begin{equation}
\begin{aligned}
&\mathbf{\tha} = [0.8, 0.4, 0] ; \\
&n_{d} \sim \operatorname{Bernoulli}(\mathbf{\theta}_{d}) ; \\
&m_{d} \sim \operatorname{Bernoulli}(\mathbf{\theta}_{d}) ; \\
&\eta_{d}= \begin{cases}\operatorname{Poisson}(\lambda=2) &  \mathbf{1} \text { if } \left(n_{d}==1\right) \\
0 & \text { otherwise }\end{cases} \\
&\mathbf{g}_{d} = \operatorname{Sample}\left([T], \eta_{d}\right)\\ 
&x_{d, t}=x_{d, t}+(\kappa_{d,t}+ \theta_{d})  \quad \text{where } \kappa_{d,t}\sim \NM(1, 0.3)  \quad \forall t \in \mathbf{g}_{d} \label{synthetic:2} \\
& y_{t}=\left\{\begin{array}{ll}
0 & t \leqslant \min \left(\mathbf{g}_d\right) \text {where  } m_d=1 \\
1 & \text{otherwise}
\end{array}\right.
\end{aligned}
\end{equation}

\textbf{Real-World Datasets.} The \emph{real-world} datasets also include additional patient information such as age, gender and race. The age range is $[44, 90]$ for SHHS, $[54, 95]$ for MESA and $[71, 90]$ for SOF. In total, there are $2{,}651$, $2{,}055$ and $453$ patients in SHHS, MESA, and SOF, respectively, with an average of $1{,}043$ breathing signal segments (approximately $8.7$ hours). 

\subsection{Details on Baselines}\label{sec:app_baseline}

We use the following five types of state-of-the-art baselines: 
% \vspace{-8pt}
\begin{itemize}
    \item Gradient-based methods. Regularized Gradient Descent (\textbf{RGD})~\cite{RGD} directly models $p(y|x)$ and provide the explanation by modifying input with gradients along with L1 regularization; it is therefore it is an actionable explanation method. Gradient-weighted Class Activation Mapping (\textbf{GradCAM})~\cite{GradCAM} is originally designed to models with convolutional layers. We added convolutional layers to our model to adapt GradCAM for our time series data. Gradient SHapley Additive exPlanations (\textbf{GradSHAP})~\cite{GradSHAP} is a game theoretic approach that uses the expectation of gradients to approximate the SHAP values.

    \item Perturbation-based method. Local Interpretable Model-agnostic Explanations (\textbf{LIME})~\cite{LIME} is a local interpretation approach based on the local linearity assumption and provides explanations by fitting the output of the model given locally perturbed input.

    \item Distribution shift method. Feature Importance in Time (\textbf{FIT})~\cite{FIT} is a time-series back-box model explanation method that evaluates the importance of the input data based on the temporal distribution shift and unexplained distribution shift.

    \item Association rule method. Case-crossover APriori (\textbf{CAP})~\cite{CAP} applies association rule mining algorithm, Apriori, to explore the causal relationship in time-series data.

    \item Generative method. Counterfactual Residual Generative Adversarial Network (\textbf{CounteRGAN})~\cite{CounteRGAN} combines a Residual GAN (RGAN) and a classifier to generate counterfactual output. The generated residual output is considered as the result of the $do$ operation. CounteRGAN is an actionable explanation method.
\end{itemize}

\section{Additional Results}\label{sec:app_add}
\textbf{Results on the \emph{Spike} Dataset.} 
Note that methods such as FIT are \emph{not} an actionable explanation method. It cannot provide counterfactual explanation $\x^{cf}$ that could shift the model prediction from $\y^{pred}$ to $\y^{cf}$; it can only provide importance scores to explain the current prediction $\y^{pred}$. \figref{fig:fitcase} shows the importance scores produced by FIT to explain the same model in~\figref{fig:syntheticcase} in the \emph{Spike} dataset. 

\begin{figure}[!t]
\vspace{10pt}
    \centering
        \includegraphics[width=1\textwidth]{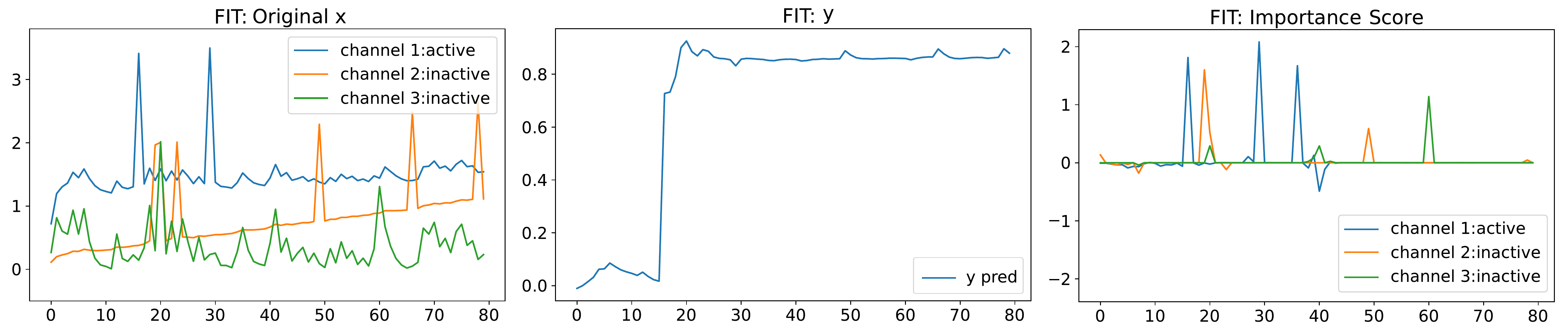}
        \vspace{-25pt}
    \caption{\label{fig:fitcase}Qualitative results from FIT in the \emph{Spike} dataset.}
    \vspace{-10pt}
\end{figure}

\begin{figure}[!t]
\vspace{10pt}
    \centering
        \includegraphics[width=1\textwidth]{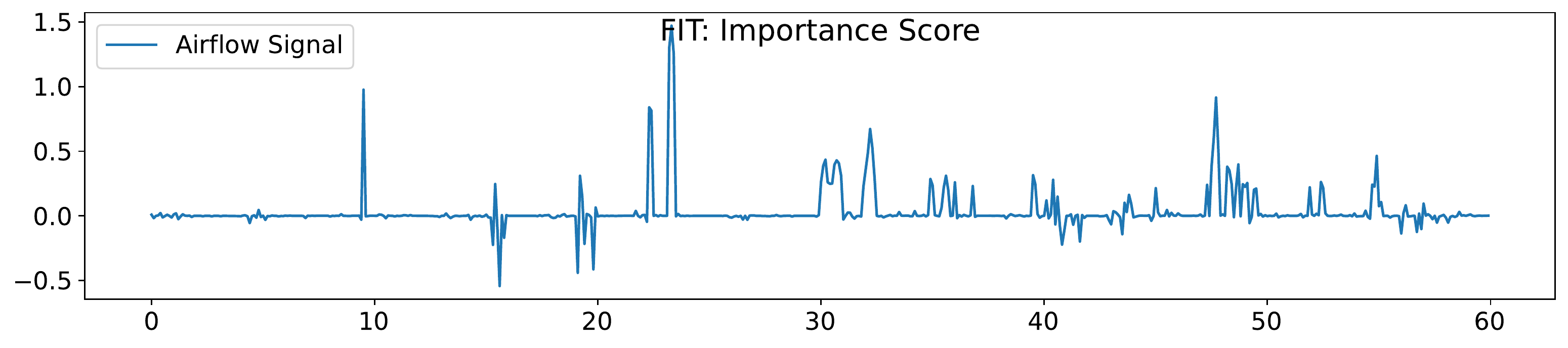}
        \vspace{-25pt}
    \caption{\label{fig:fitcase_real}Qualitative results on FIT in a \emph{MESA} intra-dataset setting.}
    \vspace{10pt}
\end{figure}

\textbf{Results on Real-World Datasets.} As FIT cannot provide actionable explanations, the importance scores produced by FIT to explain the same model in~\figref{fig:realcase} in \emph{real-world dataset} are separately shown in \figref{fig:fitcase_real}.

\tabref{tab:sof}$\sim$\ref{tab:tosof} show more detailed CCR results with different counterfactual action settings and dataset settings. \tabref{tab:realcfaccu} shows detailed counterfactual accuracy for each dataset settings and each counterfactual action settings. % are listed below.

\begin{table}[h]
\centering
% \vspace{15pt}
\caption{CCR for different counterfactual action settings (e.g., A$\rightarrow$D means `Awake'$\rightarrow$`Deep Sleep') in \emph{SOF}.}
\vspace{3pt}
\label{tab:sof}
\resizebox{0.7\textwidth}{!}{%
\begin{tabular}{cccccccc}
\toprule
 & A$\rightarrow$D & L$\rightarrow$D & R$\rightarrow$D & R$\rightarrow$A & D$\rightarrow$A & L$\rightarrow$A & Average \\ \midrule
CounteRGAN & \textbf{1.356} & {\ul 1.138} & 1.165 & 1.127 & 0.908 & 1.060 & {\ul 1.126} \\
RGD & 0.996 & 1.103 & {\ul 1.174} & 0.951 & 1.108 & {\ul 1.194} & 1.088 \\
GradCAM & 1.012 & 0.926 & 0.644 & 0.727 & 1.012 & 1.003 & 0.887 \\
GradSHAP & 0.843 & 1.177 & {\ul 1.219} & 0.913 & 0.843 & 0.981 & 0.996 \\
LIME & {\ul 1.220} & 0.857 & 1.095 & 0.936 & {\ul 1.220} & 1.139 & 1.078 \\
FIT & 1.064 & 1.150 & 1.169 & {\ul 1.202} & 1.064 & 0.667 & 1.053 \\
CAP & 0.989 & 0.952 & 1.019 & 1.139 & 0.989 & 0.689 & 0.963 \\ 
CounTS (Ours) & 1.179 & \textbf{1.161} & \textbf{1.240} & \textbf{1.475} & \textbf{1.245} & \textbf{1.387} & \textbf{1.281} \\ \bottomrule
\end{tabular}%
}
\end{table}

\begin{table}[h]
\centering
\caption{CCR for different counterfactual action settings in \emph{SHHS+SOF$\rightarrow$ MESA}.}
\label{tab:tomesa}
\resizebox{0.7\textwidth}{!}{%
\begin{tabular}{cccccccc}
\toprule
 & A$\rightarrow$D & L$\rightarrow$D & R$\rightarrow$D & R$\rightarrow$A & D$\rightarrow$A & L$\rightarrow$A & Average \\ \midrule
CounteRGAN & {\ul 1.381} & \textbf{1.379} & 1.068 & 1.373 & {\ul 1.355} & 1.158 & {\ul 1.285} \\
RGD & 1.226 & 1.130 & 1.089 & 1.215 & 1.177 & {\ul 1.293} & 1.188 \\
GradCAM & 1.056 & 1.014 & 0.743 & 1.218 & 1.056 & 1.101 & 1.031 \\
GradSHAP & 1.174 & 0.822 & 1.046 & 0.948 & 1.174 & 0.988 & 1.025 \\
LIME & 0.904 & 1.246 & 1.138 & 1.017 & 0.904 & 0.836 & 1.007 \\
FIT & 1.143 & 1.228 & {\ul 1.193} & 1.206 & 1.143 & 0.938 & 1.142 \\
CAP & 0.969 & 1.179 & 0.947 & {\ul 1.418} & 0.969 & 0.703 & 1.031 \\
CounTS (Ours) & \textbf{1.391} & {\ul 1.331} & \textbf{1.253} & \textbf{1.470} & \textbf{1.465} & \textbf{1.355} & \textbf{1.377}\\ \bottomrule
\end{tabular}%
}
\end{table}

\begin{table}[h]
\vspace{0pt}
\centering
\caption{CCR for different counterfactual action settings in \emph{MESA}.}
\label{tab:mesa}
\resizebox{0.7\textwidth}{!}{%
\begin{tabular}{cccccccc}
\toprule
 & A$\rightarrow$D & L$\rightarrow$D & R$\rightarrow$D & R$\rightarrow$A & D$\rightarrow$A & L$\rightarrow$A & Average \\ \midrule
 CounteRGAN & {\ul 1.183} & \textbf{1.345} & 1.160 & 1.117 & 0.974 & 1.200 & 1.163 \\
RGD & 1.159 & 1.088 & 1.149 & 1.204 & 1.190 & {\ul 1.207} & {\ul 1.166} \\
GradCAM & 1.078 & 0.790 & 0.419 & 0.954 & 1.078 & 0.977 & 0.883 \\
GradSHAP & 0.885 & 1.027 & 1.039 & {\ul 1.273} & 0.885 & 1.145 & 1.042 \\
LIME & 0.903 & 1.003 & 0.930 & 1.233 & 0.903 & 0.952 & 0.987 \\
FIT & 1.210 & 1.118 & 1.024 & 1.082 & {\ul 1.210} & 0.887 & 1.089 \\
CAP & 0.786 & 1.001 & {\ul 1.176} & 1.243 & 0.786 & 0.783 & 0.962 \\ 
CounTS (Ours) & \textbf{1.213} & {\ul 1.309} & \textbf{1.294} & \textbf{1.526} & \textbf{1.350} & \textbf{1.285} & \textbf{1.329}\\ \bottomrule
\end{tabular}%
}
\vspace{0pt}
\end{table}

\begin{table}[h]
\vspace{10pt}
\centering
\caption{CCR for different counterfactual action settings in \emph{MESA+SOF$\rightarrow$SHHS}.}
\label{tab:toshhs}
\resizebox{0.7\textwidth}{!}{%
\begin{tabular}{cccccccc}
\toprule
 & A$\rightarrow$D & L$\rightarrow$D & R$\rightarrow$D & R$\rightarrow$A & D$\rightarrow$A & L$\rightarrow$A & Average \\ \midrule
 CounteRGAN & \textbf{1.317} & 1.157 & \textbf{1.281} & 1.300 & 1.270 & 1.060 & 1.231 \\
RGD & 1.193 & {\ul 1.266} & 1.115 & {\ul 1.313} & {\ul 1.304} & {\ul 1.255} & {\ul 1.241} \\
GradCAM & 1.131 & 1.093 & 0.725 & 1.306 & 1.131 & 0.953 & 1.056 \\
GradSHAP & 1.025 & 0.910 & 1.092 & 0.886 & 1.025 & 0.910 & 0.975 \\
LIME & 1.018 & 1.092 & 1.118 & 0.944 & 1.018 & 0.930 & 1.020 \\
FIT & 1.270 & 1.259 & 1.105 & 1.263 & 1.270 & 0.991 & 1.193 \\
CAP & 0.994 & 0.819 & 1.033 & 0.953 & 0.994 & 0.777 & 0.929 \\ 
CounTS (Ours) & {\ul 1.301} & \textbf{1.452} & {\ul 1.167} & \textbf{1.560} & \textbf{1.417} & \textbf{1.443} & \textbf{1.390}\\ \bottomrule
\end{tabular}%
}
\end{table}

\begin{table}[h]
\centering
\vspace{0pt}
\caption{CCR for different counterfactual action settings in \emph{SHHS+MESA$\rightarrow$SOF}.}
\label{tab:tosof}
\resizebox{0.7\textwidth}{!}{%
\begin{tabular}{cccccccc}
\toprule
 & A$\rightarrow$D & L$\rightarrow$D & R$\rightarrow$D & R$\rightarrow$A & D$\rightarrow$A & L$\rightarrow$A & Average \\ \midrule
CounteRGAN & \textbf{1.210} & 1.050 & 1.034 & \textbf{1.403} & 1.065 & 1.248 & {\ul 1.153} \\
RGD & 1.183 & 0.993 & 1.015 & 1.260 & {\ul 1.155} & 1.201 & 1.135 \\
GradCAM & 1.108 & 1.018 & 1.020 & 1.215 & 1.108 & {\ul 1.286} & 1.126 \\
GradSHAP & 0.983 & 0.929 & 0.780 & 1.044 & 0.983 & 0.728 & 0.908 \\
LIME & 0.738 & 1.051 & {\ul 1.199} & 1.155 & 0.738 & 0.708 & 0.932 \\
FIT & 1.110 & {\ul 1.175} & 0.954 & 1.206 & 1.110 & 0.876 & 1.072 \\
CAP & 1.106 & 0.951 & 0.833 & 1.145 & 1.106 & 1.060 & 1.034 \\ 
CounTS (Ours) & {\ul 1.203} & \textbf{1.266} & \textbf{1.202} & {\ul 1.395} & \textbf{1.472} & \textbf{1.341} & \textbf{1.313} \\ \bottomrule
\end{tabular}%
}
\vspace{10pt}
\end{table}

\begin{table}[t]
\vspace{-9cm}
\centering
\caption{Couterfactual accuracy for each dataset settings and each counterfactual action settings.}
\label{tab:realcfaccu}
\resizebox{0.7\textwidth}{!}{%
\begin{tabular}{ccccccccc}
\toprule
 & Method  & A$\rightarrow$D & L$\rightarrow$D & R$\rightarrow$D & R$\rightarrow$A & D$\rightarrow$A & L$\rightarrow$A & Average\\ \midrule\midrule
\multirow{4}{*}{\tabincell{c}{SHHS+MESA\\$\downarrow$\\SOF}}
& RGD  & {\ul 0.867} & {\ul 0.903} & {\ul 0.921} & \textbf{0.932} & \textbf{0.931} & \textbf{0.922} & {\ul 0.913}\\
& CounteRGAN  & 0.848 & \textbf{0.917} & 0.903 & 0.917 & 0.896 & 0.900 & 0.897\\
 & FIT  & 0.285 & 0.296 & 0.328 & 0.384 & 0.285 & 0.348 & 0.331\\ 
  & CounTS (Ours)  & \textbf{0.868} & 0.893 & \textbf{0.929} & {\ul 0.918} & {\ul 0.920} & {\ul 0.913} & \textbf{0.916} \\\midrule
  \multirow{4}{*}{\tabincell{c}{MESA+SOF\\$\downarrow$\\SHHS}}
& RGD  & {\ul 0.828} & \textbf{0.882} &  0.916 & \textbf{0.910} & \textbf{0.903} & \textbf{0.890} & \textbf{0.887}\\
 & CounteRGAN  & {\ul 0.828} & 0.869 & {\ul 0.920} & 0.867 & 0.880 & {\ul 0.875} & {\ul 0.868}\\
 & FIT  & 0.334 & 0.263 & 0.374 & 0.354 & 0.334 & 0.348 & 0.327\\ 
 & CounTS (Ours)  & \textbf{0.837} & {\ul 0.878} & \textbf{0.923} & {\ul 0.899} & {\ul 0.899} & \textbf{0.890} & \textbf{0.887}\\  \midrule
 \multirow{4}{*}{\tabincell{c}{SHHS+SOF\\$\downarrow$\\MESA}} 
& RGD  & \textbf{0.882} & \textbf{0.898} & {\ul 0.933} & \textbf{0.910} & \textbf{0.938} & {\ul 0.888} & \textbf{0.907}\\
 & CounteRGAN  & {\ul 0.872} & 0.878 & 0.926 & 0.905 & 0.901 & \textbf{0.897} & 0.880\\
 & FIT  & 0.335 & 0.269 & 0.301 & 0.357 & 0.335 & 0.397 & 0.346\\ 
 & CounTS (Ours)  & \textbf{0.882} & {\ul 0.888} & \textbf{0.936} & \textbf{0.910} & {\ul 0.930} & 0.881 & {\ul 0.889}\\
\midrule\midrule
\multirow{4}{*}{SOF} 
& RGD &   {\ul 0.823} & {\ul 0.820} & \textbf{0.919} & {\ul 0.856} & {\ul 0.861} & \textbf{0.891} &\textbf{0.862}\\
 & CounteRGAN  & \textbf{0.831} & 0.800 & 0.899 & \textbf{0.867} & 0.832 & 0.875 & 0.844\\
 & FIT  & 0.257 & 0.328 & 0.311 & 0.328 & 0.257 & 0.310 & 0.248\\ 
 & CounTS (Ours)  & 0.813 & \textbf{0.831} & \textbf{0.907} & 0.843 & \textbf{0.864} & {\ul 0.881} & {\ul 0.852} \\ \midrule
\multirow{4}{*}{SHHS} 
& RGD  & \textbf{0.880} & \textbf{0.892} & \textbf{0.940} & \textbf{0.944} & {\ul 0.887} & \textbf{0.900} & \textbf{0.907}\\
 & CounteRGAN  & 0.841 & 0.854 & 0.916 & 0.913 & \textbf{0.888} & 0.875 & 0.888\\
 & FIT  & 0.335 & 0.272 & 0.353 & 0.337 & 0.335 & 0.348 & 0.322\\ 
 & CounTS (Ours)  & {\ul 0.868} & {\ul 0.880} & {\ul 0.930} & \textbf{0.931} & 0.875 & {\ul 0.889} & {\ul 0.903} \\\midrule
\multirow{4}{*}{MESA} 
& RGD  & \textbf{0.846} & \textbf{0.888} & {\ul 0.912} & \textbf{0.907} & \textbf{0.909} & {\ul 0.888} & \textbf{0.892}\\
 & CounteRGAN  & 0.828 & 0.869 & \textbf{0.920} & 0.867 & 0.880 & 0.875 & 0.868\\
 & FIT  & 0.290 & 0.316 & 0.320 & 0.307 & 0.290 & 0.301 & 0.294\\
 & CounTS (Ours)  & {\ul 0.837} & {\ul 0.878} & 0.903 & {\ul 0.899} & {\ul 0.899} & \textbf{0.890} & {\ul 0.887} \\
\bottomrule
\end{tabular}%
}
\end{table}

%%%%%%%%%%%%%%%%%%%%%%%%%%%%%%%%%%%%%%%%%%%%%%%%%%%%%%%%%%%%%%%%%%%%%%%%%%%%%%%
%%%%%%%%%%%%%%%%%%%%%%%%%%%%%%%%%%%%%%%%%%%%%%%%%%%%%%%%%%%%%%%%%%%%%%%%%%%%%%%

\end{document}